\numberwithin{equation}{section}
\DeclareMathOperator*{\argmin}{argmin}
\def\reals{{\mathbb R}}
\def\naturals{{\mathbb N}}
\def\regret{{\mathrm Regret}}
\newcommand{\defeq}{\triangleq}
\def\regret{\ensuremath{\mbox{Regret}}}
\def\norm#1{\mathopen\| #1 \mathclose\|}
\newcommand{\abs}[1]{\left|#1\right|}
\newcommand{\ip}[1]{\langle #1 \rangle}
\newtheorem{theorem}{Theorem}[section]
\newtheorem{proposition}[theorem]{Proposition}
\newtheorem{lemma}[theorem]{Lemma}
\newtheorem{definition}[theorem]{Definition}
\newtheorem{remark}[theorem]{Remark}
\newcommand{\newreptheorem}[2]{%
\newenvironment{rep#1}[1]{%
 \def\rep@title{#2 \ref{##1}}%
 \begin{rep@theorem}}%
 {\end{rep@theorem}}}
\newcommand{\pref}[1]{\prettyref{#1}}
\newcommand{\savehyperref}[2]{\texorpdfstring{\hyperref[#1]{#2}}{#2}}
\def\V{{\mathcal V}}
\def\W{{\mathcal W}}
\def\X{{\mathcal X}}
\def\bI{{\mathbf I}}
\def\be{{\mathbf e}}
\DeclareMathOperator{\argmax}{\arg\,\max}
\DeclareMathOperator{\diag}{\operatorname{diag}}
\DeclareMathOperator{\tr}{\operatorname{Tr}}
\newcommand{\identity}{\bI}
\newcommand{\Alg}{\operatorname{Alg}}
\newcommand{\wl}{\text{WL}}
\newcommand{\boostreg}{\text{BoostReg}}
\newcommand{\predict}{\text{Predict}}
\newcommand{\update}{\text{Update}}
\newcommand{\ind}{\mathbb{I}}
\newcommand{\NAME}{\text{FOLKLORE}\xspace}
\DeclarePairedDelimiter{\brk}{[}{]}
\DeclarePairedDelimiter{\crl}{\{}{\}}
\DeclarePairedDelimiter{\prn}{(}{)}
\title[Efficient Methods for Online Multiclass Logistic Regression]{Efficient Methods for Online Multiclass Logistic Regression}
\author{ % For arxiv, comment out for ALT
% \altauthor{
 \Name{Naman Agarwal} \Email{namanagarwal@google.com}\\
 \Name{Satyen Kale} \Email{satyenkale@google.com}\\
 \Name{Julian Zimmert} \Email{zimmert@google.com}\\
 \addr Google Research}
\begin{document}

\maketitle

\begin{abstract}%
  Multiclass logistic regression is a fundamental task in machine learning with applications in classification and boosting.  
  Previous work \citep{foster2018logistic} has highlighted the importance of improper predictors for achieving ``fast rates'' in the online multiclass logistic regression problem without suffering exponentially from secondary problem parameters, such as the norm of the predictors in the comparison class. While \citet{foster2018logistic} introduced a statistically optimal algorithm, it is in practice computationally intractable due to its run-time complexity being a large polynomial in the time horizon and dimension of input feature vectors.
%   It has remained an open problem if algorithms exist that are both statistically and computationally efficient.
%   \citet{jezequel20a} answered this question for binary logistic regression by introducing AIOLI, an algorithm based on online newton step. 
%   However, their technique fails to generalize to more than two classes and it remains open whether their algorithm can be applied to practical applications of logistic regression, such as bandit multiclass learning or online boosting.
  In this paper, we develop a new algorithm, \NAME, for the problem which runs significantly faster than the algorithm of \citet{foster2018logistic} -- the running time per iteration scales quadratically in the dimension -- at the cost of a linear dependence on the norm of the predictors in the regret bound. This yields the first practical algorithm for online multiclass logistic regression, resolving an open problem of \citet{foster2018logistic}.
%   We resolve the open questions by deriving an extension of AIOLI to the multi-class setting. 
  Furthermore, we show that our algorithm can be applied to online bandit multiclass prediction and online multiclass boosting, yielding more practical algorithms for both problems compared to the ones in \citep{foster2018logistic} with similar performance guarantees. Finally, we also provide an online-to-batch conversion result for our algorithm.
\end{abstract}

\begin{keywords}%
multiclass logistic regression, online learning
\end{keywords}

\section{Introduction}

Logistic regression is a classical model in statistics used for estimating conditional probabilities that dates back to \citep{Berkson1944}. The model has been extensively studied in statistical and online learning and has been widely used in practice both for binary classification and multi-class classification in a variety of applications. 

In recent years, motivated by applications in clickthrough-rate prediction in large scale advertisement systems, the problem of {\em online} logistic regression has been intensively studied. This is a sequential decision-making problem where examples consist of a feature vector and a categorical class label. In each time step an online learner is provided a feature vector representing the example to be classified, and the learner is required to output a prediction of probabilities\footnote{To be precise, the learner outputs logits which induce a probability distribution via softmax.} of the label of the example. The learner's performance is measured by their {\em regret}, which is the difference between their cumulative log-loss, and the log-loss of the best possible linear predictor of bounded norm for the online sequence of examples computed in hindsight.

This problem is an instance of online convex optimization and hence the standard tools for deriving low regret algorithms, viz. Online Gradient Descent~\citep{zinkevich2003online} (OGD) and Online Newton Step~\citep{hazan2007logarithmic} (ONS) apply. However, the regret bounds derived using these methods are suboptimal. To discuss these bounds, let $d$ be the dimension of the input feature vectors (assumed to have length at most 1), $B > 0$ be a bound on (an appropriate) norm of the comparator linear predictor, and $T$ be the number of prediction rounds. Then OGD has a regret bound of $O(B\sqrt{T})$, whereas ONS has a regret bound of $O(de^{B}\log(T))$. While the latter bound has much better dependence on $T$ than the former, the exponential dependence on the $B$ parameter makes it worse for most practical settings of interest. 

In COLT 2012, \citet{mcmahan2012open} posed the open problem of whether it is possible to develop an algorithm with regret scaling polynomially in $B$ but logarithmically in $T$. This question was answered in the {\em negative} by \cite{pmlr-v35-hazan14a} who proved a lower bound showing that no {\em proper} online learning algorithm (i.e. one that generates a linear predictor to use in each round {\em before} observing the feature vector in that round) can achieve these desiderata. \citet{foster2018logistic} broke this lower bound via an {\em improper} online learning algorithm based on Vovk's Aggregating Algorithm~\citep{vovk1998game} which enjoys a regret bound of $O(d \log(BT))$, thus achieving a {\em doubly-exponential} improvement in the dependence on $B$ compared to ONS. A similar result was also obtained by \citet{kakade2005online} for the binary classification case. \citet{foster2018logistic} also proved a lower bound showing that this regret bound is {\em optimal}, even for improper predictors. Despite the optimality of the regret bound, this algorithm is unfortunately not very practical since its running time is a polynomial in $d$, $B$, and $T$ with high degrees. \citet{foster2018logistic} recognized the impracticality of their method, and state that ``obtaining a truly practical algorithm with a modest polynomial dependence on the dimension is a significant open problem.''
\citet{jezequel20a} solved this open problem for the {\em binary} classification setting, and developed a new improper online learning algorithm called AIOLI that has a suboptimal regret bound of $O(d B\log(BT))$, but much better running time: each round can be implemented to run in $O(d^2 + \log(T))$ time. Thus, AIOLI is a practical algorithm for online binary logistic regression. Similarly, for the offline (batch statistical, i.i.d.) setting for binary logistic regression, \citet{marteau-ferey} and \citet{mourtada} have developed learning algorithms with fast rates. The former develops a proper learning algorithm via generalized self-concordance techniques along with additional assumptions on the data distribution, whereas the latter develops an improper learning algorithm based on empirical risk minimization with an improper regularizer.

With the exception of \citep{foster2018logistic}, none of the papers mentioned above giving algorithms with fast rates and non-exponential dependence on $B$ extend to the practically important online {\em multiclass} logistic regression setting. In fact, it is possible to formally show that the style of analysis for the AIOLI algorithm \citep{jezequel20a} using Hessian dominance does not work for even 3 classes, see Section~\ref{sec:hessian-dominance-failure}. \citet{jezequel20a} also note the difficulty of extending the analysis to the multiclass setting and explicitly list developing a {\em practical} algorithm for the multiclass setting, and applying it to problems such as online bandit multiclass prediction and online multiclass boosting, as open problems. 
% \begin{table}
% \begin{center}
% \begin{tabular}{ |c|c|c|c|c|c| } 
%  \hline
%  Algorithm & OGD & ONS & \cite{foster2018logistic} & AIOLI \citep{jezequel20a} & FOLKLORE\\
%  \hline
%  Regret & $B\sqrt{T}$ & $de^{B}\log(T)$ & $d\log(BT)$ & $dB\log(BT)$ & $dBK \log(T)$\\ 
%  Runtime & $d$ & $d^3$ & $B^6T^{12}(BT+d)^{12}$ & $d^2 + B\log(T)$ & $d^2K^3 + K^2B \log(T) $\\ 
%  \hline
% \end{tabular}
% \end{center}
% \end{table}

\begin{table}
\begin{center}
\begin{tabular}{ |c|c|c| } 
 \hline
 Algorithm & Regret & Running time (per step)\\
 \hline
 OGD & $B\sqrt{T}$ & $dK$ \\
 ONS & $de^{B}\log(T)$ & $d^2K^2$\\
 \cite{foster2018logistic} & $d\log(T)$ & $\mathrm{poly}(d,K,B,T)$\\ AIOLI  \citep{jezequel20a} [K=2] & $dB\log(T)$ & $d^2 + B\log(T)$\\ 
 GAF \citep{jezequel2021mixability} & $dBK \log(T)$ & $d^2K^3 + K^2T^3 $\\
 \NAME (This paper) & $dBK \log(T)$ & $d^2K^3 + K^2B \log(T) $\\
 \hline
\end{tabular}
\end{center}
\caption{Regret bounds and running time (in $\tilde{O}(\cdot)$) for relevant algorithms for online logistic regression. The running time bound for \cite{foster2018logistic} depends on the sampling algorithm and is a high-degree polynomial in $d,K,B,T$.}
\label{table:results}
\end{table}

In this paper, we solve these open problems. Specifically, if $K$ denotes the number of classes, we devise a new algorithm, \NAME (Fast OnLine K-class LOgistic REgression), that has a regret bound of $O(d(B + \log(K))K \log(T))$, and runs in $\tilde{O}(d^2K^3 + BK^2\log(T))$ time per round. Similarly to AIOLI, \NAME is also based on Follow-The-Regularized-Leader (FTRL) paradigm (see \citep{elad-oco} for background on FTRL) with quadratic surrogate losses and an improper regularizer which, in part, penalizes all the $K$ possible labels equally. The main distinction from AIOLI is the introduction of a {\em linear} term in the regularizer which is key to the analysis. The linear term is designed to ensure that a certain notion of instantaneous regret in any given time step is minimized. This crucially relies on the fact that there are only $K$ possible labels, and therefore, only $K$ possible values of the instantaneous regret. We choose the regularizer to be the solution to a minimax problem: minimze the maximum of the $K$ possible values of the instantaneous regret. It turns out that this minimax problem has a closed-form solution, which is used to design \NAME. Since the minimax problem relies on the knowledge of the feature vector in the current time step, \NAME becomes an {\em improper} online learning algorithm. An intriguing fact is that although \NAME does {\em not} reduce to AIOLI in the binary setting, the {\em probabilities} predicted by \NAME in the binary case exactly match those predicted by AIOLI (with some minor adjustments)  (see Section~\ref{sec:BinaryAIOLIRedux}).

We then employ \NAME as a subroutine in two problems of interest, also previously considered by \citet{foster2018logistic}: online bandit multiclass prediction, and online multiclass boosting. For the online bandit multiclass problem, we give a new reduction which transforms a regret bound in the full-information setting to the bandit setting. Using \NAME as the full-information online multiclass prediction algorithm, the resulting algorithm has a regret bound of $O(\sqrt{dK^2(B + \log(K))\log(BT)T})$, which is roughly an $O(\sqrt{B})$ factor worse than the regret bound of the OBAMA algorithm of \citet{foster2018logistic}, but is significantly more practical. For the online multiclass boosting problem, we give a new reduction to \emph{binary} logistic regression. Using \NAME (or AIOLI) as the online prediction algorithm results in a sample complexity of $O(T\log(K)/(N\gamma^2)+\log^2(T)/(\gamma^2)+KS/\gamma)$. Again, this is a factor $\log(T)$ worse than the AdaBoost.OML++ algorithm of \citet{foster2018logistic}, but significantly more practical.

After this manuscript initially appeared on arXiv, we were made aware of the independent and concurrent work \citep{jezequel2021mixability} which provides an efficient low-regret algorithm called Gaussian Aggregating
Forecaster (GAF) for $1$-mixable losses satisfying certain assumptions. In particular for multiclass logistic regression, GAF has the same regret bound of $O(dBK\log(T))$ as \NAME, but a per-step running time of $O(d^2K^3 + K^2T^3)$. GAF is based on the same lower-bound approximation as our algorithm, however it still requires sampling which leads to a $T$-dependent running time complexity per step. Our algorithm on the other hand is purely based on the FTRL paradigm and enjoys a much better running time. This and other relevant results for multiclass logistic regression are summarized in Table \ref{table:results}.

\section{Problem setting and notation}
\label{sec:prelims}

Let $\| \cdot \|_p$ denote the $\ell_p$ norm on $\reals^d$ and $\|\cdot\|_F$ as the Frobenius norm of a matrix. Further let $\| \cdot \|_{2,\infty}$ denote the $2 \rightarrow \infty$ norm of a matrix defined as 
\[ \|A\|_{2,\infty} \defeq \sup_{x: \|x\|_2 \leq 1} \|Ax\|_{\infty}\]

for any $K \in \naturals$, let $[K]$ be the set $[ 1, \ldots K]$. Define $\Delta_K \in \reals^K$ be the standard $K$-dimensional simplex representing distributions over $[K]$. For any positive definite matrix $A$, the norm $\|\cdot\|_A$ on $\reals^d$ is given by $\| x \|_A = \sqrt{\ip{x, Ax}}$. We will denote by $\otimes$ the standard Kronecker product between matrices. When using vectors $u\in \reals^{j_1}, v \in \reals^{j_2}$, $u \otimes v \in \reals^{j_1j_2}$ is defined as the Kronecker product obtained by interpreting $u,v$ as $j_1 \times 1, j_2 \times 1$ dimensional matrices respectively.  

In the multiclass learning problem the input is assumed to be from the set $\X = \{x \in \reals^d | \|x\|_2 \leq R\}$ for some $R \geq 0$. The number of output labels is $K$. A linear predictor parameterized by $W \in \reals^{K \times d}$, given an input $x \in \X$, assigns a score to every class as given by the vector $Wx \in \reals^K$. Given a $K \times d$ sized matrix $W$, we will denote by $\overrightarrow{W} \in \reals^{Kd}$ the $Kd$-dimensional vectorization of $W$ in a canonical manner. Let $W_k$ represent the $k^{th}$ row of the matrix $W$. The set of linear predictors we will be dealing with will be assumed to be $\W = \{W\in\reals^{d\times K} |  \|W\|_{2,\infty} \leq B\}$ for some $B \geq 0$. Under these restrictions it is readily observed that $\forall x \in \X$ and $\forall W \in \W$, we have that $\|Wx\|_{\infty} \leq BR$.

Given a vector $z \in \reals^K$ define the softmax function $\sigma(z): \reals^K \rightarrow  \Delta_K$ as 
\[ [\sigma(z)]_{k} \defeq \frac{e^{[z]_k}}{\sum_{j \in [K]} e^{[z]_j}} \quad \forall k \in [K].\]
With the above notation we can define the multiclass logistic loss $\ell(\cdot, \cdot)$ function given a vector $z \in \reals^K$ and a distribution $y \in \Delta_K$, as
\[ \ell(z,y) \defeq \sum_{k=1}^{K} - y_k\log([\sigma(z)]_k).\]
In several contexts the label $y$ will denote a class in $[K]$ rather a distribution in $\Delta_K$. In these cases, we will overload notation and use $\ell(z, y)$ to denote the logistic loss: $-\log([\sigma(z)]_y)$.
% \jz{We cannot deal with this case. Not sure if needed.}
% Following the notation in \cite{foster2018logistic} we extend and overload the notation of multinomial logistic loss function to the input $y \in \reals_+^K$ by defining 
% \[ \ell(z,y) \defeq \sum_{j \in [K]} -y_j\log([\sigma(z)]_j)\]

\paragraph{Online multiclass logistic regression.}

We define the learning problem as follows. The learner makes predictions over $T$ rounds with individual rounds indexed by $t$. In each round $t$, nature provides an input $x_t \in \X$ to the learner and the learner predicts $z_t \in \reals^K$ in response. Nature then provides a label $y_t \in \Delta_K$ and the learner suffers the loss $\ell(z_t, y_t)$. The aim of the learner is to minimize regret with respect to any $W \in \W$, defined as 
\[ \regret(W) \defeq \sum_{t=1}^{T} \ell(z_t, y_t) -  \ell(Wx_t, y_t).\]
For notational convenience, we define 
\[ \ell_t(W) \defeq \ell(Wx_t, y_t).\]
An online learning algorithm for the problem is said to be {\em proper} if it generates a linear predict $W_t \in \W$ before seeing $x_t$, and then setting $z_t = W_t x_t$. 
% In this case we will compress the loss notation to absorb $(x_t, y_t)$ as follows
% Under this notation the regret becomes the more familiar looking,
% \[ \regret(W) \defeq \sum_{t=1}^{T} \ell_t(W_t) - \ell_t(W).\]
% The learner is assumed to be proper if the learner produces a $W_t$ before observing the pair $(x_t, y_t)$ and sets the prediction $\hat{z}_t = W_tx_t$. 
An {\em improper} predictor is allowed to output any prediction $z_t$ taking into account the observation $x_t$. In particular, the learner we propose will produce a $W_t$ at every step, and predict using $z_t = W_t x_t$, but $W_t$ will depend on the observed $x_t$, and the learner is thus improper.

Finally, we make the convention that the gradient $\nabla \ell_t(W) \in \reals^{Kd}$ and the Hessian $\nabla^2 \ell_t(W) \in \reals^{Kd \times Kd}$ are defined with respect to the canonical vectorization of the the matrix $W$.

\section{Main result}
The algorithm we propose FOLKLORE (Algorithm \ref{alg: ftrl}), is an improper FTRL over a sequence of quadratic surrogate losses.
At any time step, the algorithm decides on an improper regularizer $\phi_t$ which adjusts the prediction matrix depending on $x_t$. The regularizer $\phi_t$ is defined as follows:
\begin{align}
    \label{eq: skew def} \phi_t(W) \defeq \frac{1}{K}\sum_{k=1}^{K} \ell(Wx_t, k) + \langle \overrightarrow{W}, B_t  \rangle.
\end{align}
The vector $B_t \in \reals^{Kd}$ defines a linear term in the regularizer and is key to the analysis. It will be specified in the course of the analysis. We use the following surrogate loss:
\begin{align}
    \hat\ell_t(W) \triangleq \ell_t(W_t) + \ip{\overrightarrow{W}-\overrightarrow{W}_t,\nabla\ell_t(W_t)} + \frac{1}{BR+\ln(K)/2}\norm{\overrightarrow{W}-\overrightarrow{W}_t}^2_{\nabla^2\ell_t(W_t)} \label{eq: surrogate loss}
\end{align}
By definition $\hat\ell_t(W_t)=\ell_t(W_t)$. The following lemma shows that the surrogate loss provides a lower bound over $W\in\mathcal{W}$.
\begin{lemma}
\label{lem: lower quadratic}
For any $x_t$ and $W$ such that $\norm{Wx_t}_\infty\leq BR$, the surrogate function defined in equation~\eqref{eq: surrogate loss} is a lower bound on $\ell_t$, i.e. 
\begin{align*}
    \hat\ell_t(W) \leq \ell_t(W)\,.
\end{align*}
\end{lemma}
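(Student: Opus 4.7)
The plan is to reduce the claim to a one-dimensional moment-generating-function inequality for the log-sum-exp function, and then prove it via an exponential-tilting argument. Setting $z \defeq Wx_t$, $z_t \defeq W_t x_t$, and $\Delta \defeq z - z_t$, the chain rule gives $\nabla \ell_t(W) = (\sigma(Wx_t) - y_t) \otimes x_t$ and $\nabla^2 \ell_t(W) = H(Wx_t) \otimes x_t x_t^\top$, where $H(z) \defeq \diag(\sigma(z)) - \sigma(z) \sigma(z)^\top$ is the Hessian of $\mathrm{LSE}(z) \defeq \log\sum_k e^{z_k}$. Since $\|\overrightarrow{W} - \overrightarrow{W}_t\|^2_{\nabla^2 \ell_t(W_t)} = \Delta^\top H(z_t) \Delta$ and the $y_t$-dependent linear part of $\ell_t$ cancels exactly against the matching part of the first-order term in $\hat\ell_t$, the desired inequality $\hat\ell_t(W) \leq \ell_t(W)$ collapses to
\[
\mathrm{LSE}(z) - \mathrm{LSE}(z_t) - \langle \sigma(z_t), \Delta\rangle \;\geq\; \frac{1}{BR + \ln(K)/2}\, \Delta^\top H(z_t) \Delta.
\]
The left-hand side equals $\mathrm{KL}(\sigma(z_t) \,\|\, \sigma(z))$ and the right-hand side equals $\mathrm{Var}_{I \sim \sigma(z_t)}[\Delta_I]/(BR + \ln(K)/2)$; since both are invariant under adding a constant vector to $\Delta$, I normalize so that $\mathbb{E}_{I \sim \sigma(z_t)}[\Delta_I] = 0$.

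Next, I introduce the cumulant generating function $F(s) \defeq \log \mathbb{E}_{I \sim \sigma(z_t)}[e^{s\Delta_I}]$, which satisfies $F(0) = 0 = F'(0)$ (by the normalization) and $F''(s) = \mathrm{Var}_{p_s}[\Delta_I]$ for the exponentially tilted distribution $p_{s,k} \propto \sigma(z_t)_k e^{s\Delta_k}$; the target becomes $F(1) \geq F''(0)/(BR + \ln(K)/2)$. Writing the variance in its symmetric pairwise form $\tfrac{1}{2}\sum_{j,k} p_{s,j} p_{s,k}(\Delta_j - \Delta_k)^2$ and using the elementary bound $p_{s,k}/\sigma(z_t)_k \in [e^{-sD}, e^{sD}]$, where $D$ denotes the (effective) spread of $\Delta$, yields $F''(s) \geq e^{-2sD} F''(0)$. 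Substituting into Taylor's remainder identity $F(1) = \int_0^1 (1-s) F''(s)\,ds$ and evaluating the resulting integral gives
\[
F(1) \;\geq\; F''(0)\cdot \frac{2D - 1 + e^{-2D}}{4D^2},
\]
and the stated constant follows once $D$ is shown to satisfy a bound of the form $D \lesssim BR + \ln(K)/2$.

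The main obstacle I anticipate is controlling the effective spread $D$: although $\|z\|_\infty \leq BR$ bounds the spread of $z$ by $2BR$, the algorithm's $z_t$ is a priori unbounded, so a naive spread bound for $\Delta$ fails. The key observation is that both sides of the reduced inequality depend on $z_t$ only through $\sigma(z_t)$, so $z_t$ is determined only up to a global shift. Choosing the canonical representative $z_{t,k} = \ln(K\, \sigma(z_t)_k)$ (for which $\mathrm{LSE}(z_t) = \ln K$) and centering $\Delta$ to have zero mean under $\sigma(z_t)$, the effective spread entering the tilting bound can be traded against an additive $\ln(K)/2$ correction coming from the normalization of $z_t$; combined with the integral bound from the previous step, this yields the constant $1/(BR + \ln(K)/2)$ stated in the lemma.
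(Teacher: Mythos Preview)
Your reduction to the log-sum-exp inequality and the cumulant-generating-function setup are correct: after centering, the claim becomes $F(1)\ge F''(0)/(BR+\ln(K)/2)$, the Taylor remainder identity $F(1)=\int_0^1(1-s)F''(s)\,ds$ is valid, and the pairwise tilting bound $F''(s)\ge e^{-2sD}F''(0)$ holds with $D=\max_k\Delta_k-\min_k\Delta_k$, giving exactly your integral expression.

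The gap is the last step. The spread $D$ of $\Delta=z-z_t$ is \emph{invariant} under global shifts of $z_t$, so passing to a canonical representative cannot reduce it; and since the lemma must hold with no bound on $W_t$ (this is precisely the point emphasised in the Remark after the lemma), $D$ is genuinely unbounded. Concretely, take $K=2$, $z=(BR,-BR)$, $z_t=(M,-M)$ with $M\to\infty$. Then $D=2|M-BR|\to\infty$, your canonical $z_{t,k}=\ln(2\sigma(z_t)_k)$ still has spread $\approx 2M$, and your bound yields only $F(1)\ge F''(0)\cdot g(D)$ with $g(D)\approx 1/(2D)\to 0$, whereas you need $g(D)\ge 1/(BR+\ln(K)/2)$. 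The asserted ``trade-off against $\ln(K)/2$'' is not substantiated and cannot be, because no function of $K$ controls $D$ here. (Even in the favorable but unattainable case $D\le 2BR$, the resulting constant $h(D)\approx 4BR$ is already too large by a factor of roughly $4$.)

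The paper's proof proceeds very differently and avoids this obstruction. It rewrites $\hat\ell_t(W)-\ell_t(W)$ as a function $F(\nu,\omega)$ of $\omega=\sigma(Wx_t)$ and $\nu=\sigma(W_tx_t)$ and \emph{maximises over $\nu\in\Delta_K$} for fixed $\omega$. The KKT conditions force the ratios $\omega_i/\nu_i^\star$ to take at most two distinct values, collapsing the $K$-class problem to a binary one in the aggregated variables $w=\sum_{i\in J}\omega_i$, $v=\sum_{i\in J}\nu_i^\star$. The constraint $\|Wx_t\|_\infty\le BR$ enters only at this point, via the lower bound $w>e^{-2BR}/K$, and the constant $BR+\ln(K)/2$ emerges from a direct one-dimensional analysis of the resulting function $f(v,w)$. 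If you wish to rescue the CGF route, note the identity $p_s=\sigma((1-s)z_t+sz)$, so in particular $p_1=\omega$ is the \emph{bounded} endpoint; a sharper control of $F''(s)$ exploiting the bound on $\omega$ rather than a spread bound on $\Delta$ would be needed, but as written the argument does not close.
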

The above lemma is the multiclass analogue of Lemma 5 in \cite{jezequel20a} (although Lemma 5 is not stated in the Hessian form above, it can be readily seen to be the same in the appropriate parametrization). The proof is provided in Section \ref{sec:main_proofs}.
\begin{remark}
Given the quadratic form of the surrogate loss, it is natural to suspect that the above lemma is a by-product of a self-concordance like property of logistic regression \citep{bach2010, tran2015composite}. Indeed a similar lower bound can be obtained via generalized self-concordance, however it requires both $\|W_t\|_{2,\infty}$ and $\|W\|_{2,\infty}$ to be bounded. Instead we provide an alternative full proof to a slightly stronger version requiring boundedness of only $W$.  
\end{remark}

\begin{algorithm}
    \caption{\NAME}  \label{alg: ftrl}
    \textbf{input}: regularization $\lambda$, improper regularization $\phi_t$\\
    Set $A_0 = \lambda \cdot\identity_{Kd}$, $G_0 = 0$. \\
    \For{$t=1, \dots,T$}{
        Receive $x_t$ and play $z_t = W_tx_t$ where \\
        \begin{equation}
         W_t = \argmin_{W\in\reals^{d\times k}}\lambda \norm{W}_F^2 +\sum\limits_{s=1}^{t-1}\hat{\ell}_t(W)+\phi_t(W) =  \argmin_{W\in\reals^{d\times k}}\norm{\overrightarrow{W}}^2_{A_{t-1}}+\ip{\overrightarrow{W},G_{t-1}}+\phi_t(W).   \label{eq:folkore-opt}
        \end{equation}
        Receive $y_t$ and suffer loss $\ell_t$.\\
        Update $A_t = A_{t-1} + \frac{1}{BR+\ln(K)/2}\nabla^2 \ell_t(W_t)$. \\
        Update $G_t = G_{t-1} + \nabla \ell_t(W_t)$.
    }
\end{algorithm}

The following theorem provides a regret bound for FOLKLORE. We prove this theorem in the next section. 

\begin{theorem}
\label{thm: main regret}
The regret of Algorithm~\ref{alg: ftrl} with surrogate losses defined by \eqref{eq: surrogate loss}, improper regularization defined by \eqref{eq: skew def} and $\lambda = 2\frac{R}{B}$ against any sequence of contexts is bounded by
\begin{align*}
    \regret = \mathcal{O}((BR+\ln(K))dK\ln(T))\,.
\end{align*}
\end{theorem}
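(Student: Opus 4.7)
The plan is to follow the classical FTRL-on-surrogates template, which here amounts to an Online Newton Step style analysis applied to the quadratic lower bounds $\hat\ell_t$, plus a careful accounting of the improper regularizer $\phi_t$. For any $W^*\in\W$ we have $\|W^*x_t\|_\infty\leq BR$, so Lemma~\ref{lem: lower quadratic} gives $\hat\ell_t(W^*)\leq \ell_t(W^*)$; combined with the identity $\hat\ell_t(W_t)=\ell_t(W_t)$ read off from~\eqref{eq: surrogate loss}, this yields the surrogate reduction
\begin{align*}
    \regret(W^*)=\sum_{t=1}^T\bigl[\ell_t(W_t)-\ell_t(W^*)\bigr]\leq \sum_{t=1}^T\bigl[\hat\ell_t(W_t)-\hat\ell_t(W^*)\bigr].
\end{align*}
It therefore suffices to bound the right-hand side, which is precisely the FTRL regret of Algorithm~\ref{alg: ftrl} with quadratic losses $\hat\ell_t$ and time-varying regularizer $R_t(W)\triangleq \lambda\|W\|_F^2+\phi_t(W)$.

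Next I would apply the standard Be-The-Regularized-Leader decomposition for time-varying regularizers, giving
\begin{align*}
    \sum_t\bigl[\hat\ell_t(W_t)-\hat\ell_t(W^*)\bigr] \leq R_{T+1}(W^*)-R_1(W_1) + \sum_t\bigl[R_t(W_{t+1})-R_{t+1}(W_{t+1})\bigr] + \sum_t\bigl[\hat\ell_t(W_t)-\hat\ell_t(W_{t+1})\bigr].
\end{align*}
The stability term is handled by the Online Newton Step template: writing out first-order optimality at $W_{t+1}$ for the objective, which is strongly convex with respect to $A_t$, shows that $\|\overrightarrow{W}_{t+1}-\overrightarrow{W}_t\|_{A_t}^2$ is controlled by the squared dual norm of $\nabla\hat\ell_t(W_t)+\nabla\phi_{t+1}(W_{t+1})-\nabla\phi_t(W_t)$, and plugging back into the quadratic form of $\hat\ell_t$ yields
\begin{align*}
    \hat\ell_t(W_t)-\hat\ell_t(W_{t+1}) \leq O\!\bigl((BR+\ln K)\,\|\nabla\ell_t(W_t)\|_{A_t^{-1}}^2\bigr),
\end{align*}
up to additive terms from $\phi_t,\phi_{t+1}$ absorbed below. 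Summing and using $\tr(\nabla^2\ell_t(W_t))\leq R^2$ in the standard log-determinant lemma then gives $\sum_t\|\nabla\ell_t(W_t)\|_{A_t^{-1}}^2=O(dK\ln T)$, for a total contribution of $O((BR+\ln K)dK\ln T)$.

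The remaining, and main, obstacle is controlling the regularizer-change sum $\sum_t[R_t(W_{t+1})-R_{t+1}(W_{t+1})]=\sum_t[\phi_t(W_{t+1})-\phi_{t+1}(W_{t+1})]$, because $\phi_t$ depends on the current context $x_t$ and does not telescope across rounds. This is precisely the role of the linear correction $B_t$ in~\eqref{eq: skew def}: as indicated in the introduction, $B_t$ is chosen as the minimizer of a min-max problem over the $K$ possible values $y_t\in[K]$, which admits a closed form because each of the $K$ possible instantaneous regrets is an explicit linear function of $B_t$. A short calculation using this minimax property shows that, irrespective of the realized label $y_t$, each term $\phi_t(W_{t+1})-\phi_{t+1}(W_{t+1})$ is dominated by a constant multiple of the per-round ONS contribution $\|\nabla\ell_t(W_t)\|_{A_t^{-1}}^2$ plus an $O(\ln K)$ symmetric remainder from the $\tfrac{1}{K}\sum_k\ell(Wx_t,k)$ piece already built into $\phi_t$, so summing gives another $O((BR+\ln K)dK\ln T)$ term. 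Finally, the boundary term satisfies $R_{T+1}(W^*)-R_1(W_1)\leq \lambda\|W^*\|_F^2+\phi_1(W^*)=O(\lambda KB^2+BR+\ln K)$, which under $\lambda=2R/B$ is $O(RKB+\ln K)$ and hence dominated. Combining all contributions produces the claimed $O((BR+\ln K)dK\ln T)$ bound.
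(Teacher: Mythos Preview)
Your surrogate reduction and the log-determinant endgame are correct, but the middle of the argument has two genuine gaps, and the route you take is not the one that the design of $\phi_t$ supports.

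\textbf{Gap 1: the log-det step does not bound $\sum_t\|\nabla\ell_t(W_t)\|_{A_t^{-1}}^2$.} In Algorithm~\ref{alg: ftrl}, $A_t-A_{t-1}$ is a multiple of the \emph{Hessian} $\nabla^2\ell_t(W_t)$, not of the rank-one matrix $\nabla\ell_t(W_t)\nabla\ell_t(W_t)^\top$. The log-det lemma therefore controls $\sum_t\tr(A_t^{-1}\nabla^2\ell_t(W_t))$, not $\sum_t\|\nabla\ell_t(W_t)\|_{A_t^{-1}}^2$. Converting one to the other would require a bound of the form $\nabla\ell_t\nabla\ell_t^\top\preceq c\,\nabla^2\ell_t$, i.e.\ exp-concavity/Hessian dominance, and for multiclass logistic loss this fails unless $c$ is exponential in $BR$ (exactly the obstruction discussed in Section~\ref{sec:hessian-dominance-failure}). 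So your stability bound, as written, does not yield $O((BR+\ln K)dK\ln T)$.

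\textbf{Gap 2: the regularizer-drift term is not what $B_t$ controls.} The minimax problem defining $B_t$ ranges over the $K$ possible labels $y_t$ at the \emph{same} round $t$; it says nothing about $\phi_t(W_{t+1})-\phi_{t+1}(W_{t+1})$, which compares regularizers built from unrelated contexts $x_t$ and $x_{t+1}$ (and matrices $A_{t-1}$ vs.\ $A_t$). There is no reason this cross-round difference should be dominated by $\|\nabla\ell_t(W_t)\|_{A_t^{-1}}^2$, and the ``short calculation'' you allude to does not exist.

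\textbf{What the paper actually does.} The decomposition is not BTRL with a drift sum. Using that each $\hat L_t$ is an exact quadratic with Hessian $2A_t$, and that $W_t$ is the unconstrained minimizer of $\hat L_{t-1}+\phi_t$ (so $\nabla\hat L_{t-1}(W_t)=-\nabla\phi_t(W_t)$), Lemma~\ref{lem:skewed ftrl regret} gives directly
\[
\regret(W)\leq \lambda\|W\|_F^2+\tfrac14\sum_{t}\Bigl[\|\nabla\phi_t(W_t)-\nabla\ell_t(W_t)\|_{A_t^{-1}}^2-\|\nabla\phi_t(W_t)\|_{A_{t-1}^{-1}}^2\Bigr],
\]
with \emph{no} cross-round $\phi_t$ vs.\ $\phi_{t+1}$ term at all. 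The crux is then Lemma~\ref{lem:linearization}, which exploits that $y_t$ ranges over only $K$ basis vectors to rewrite the instantaneous quantity as
\[
-2\bigl\langle A^{-1}\nabla\ell_t(W_t),\,\nabla\phi_t(W_t)-b_t(W_t;A)\bigr\rangle+\tr\bigl(A^{-1}\nabla^2\ell_t(W_t)\bigr).
\]
This is the step that converts squared gradients into Hessian traces, and it is precisely here that the choice of $B_t$ (Lemma~\ref{lem: optimal skew}) enters: it makes $\nabla\phi_t(W)=b_t(W;A_{t-1})$ for all $W$, killing the linear term and leaving only $\sum_t\tr(A_{t-1}^{-1}\nabla^2\ell_t(W_t))$, to which the log-det argument now legitimately applies.
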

The next theorem specifies the per-step running time of the algorithm. The proof is delegated to the Appendix (Section \ref{sec:runtime}).
\begin{theorem}
\label{thm:runtime}
For every step $t$ of Algorithm \ref{alg: ftrl}, for any $\epsilon > 0$, we can compute a vector $\hat{z}_t$ such that $\|\hat{z}_t - W_tx_t\|^2 \leq \epsilon$ in total time \[O\left( d^2K^3 + K^2\left(1 + \tfrac{R^2}{\lambda}\right) \log\left(\epsilon^{-1}\left(1 + \tfrac{R^2}{\lambda}\right)\right)\right).\]
\end{theorem}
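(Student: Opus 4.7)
The plan is to reduce the $Kd$-dimensional optimization in \eqref{eq:folkore-opt} to a $K$-dimensional one in the auxiliary variable $u \defeq Wx_t \in \reals^K$. The key observation is that $\phi_t$ depends on $W$ only through $Wx_t$, so writing $g_{t-1} \defeq G_{t-1}+B_t$ and dualizing the constraint $Wx_t = u$ yields a closed-form partial minimizer $\overrightarrow{W} = \tfrac{1}{2}A_{t-1}^{-1}((I_K\otimes x_t)\mu^\star - g_{t-1})$ with Lagrange multiplier $\mu^\star = M_t^{-1}(u+v_t)$, where
\begin{align*}
M_t \defeq \tfrac{1}{2}(I_K\otimes x_t^\top)A_{t-1}^{-1}(I_K\otimes x_t) \in \reals^{K\times K}, \qquad v_t \defeq \tfrac{1}{2}(I_K\otimes x_t^\top)A_{t-1}^{-1}g_{t-1} \in \reals^K.
\end{align*}
Substituting back, the outer problem reduces to minimizing
\begin{align*}
h(u) \defeq \tfrac{1}{2}(u+v_t)^\top M_t^{-1}(u+v_t) + \tfrac{1}{K}\sum_{k=1}^{K}\ell(u,k)
\end{align*}
over $u\in\reals^K$, and it suffices to output any $\epsilon$-approximation $\hat z_t$ of its minimizer $u^\star$.

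For the linear algebra, I would maintain $A_{t-1}^{-1}$ explicitly across rounds. Since $\ell_t(W) = \ell(Wx_t, y_t)$ touches $W$ only through $Wx_t$, the per-step Hessian factors as $\nabla^2 \ell_t(W_t) = H_t \otimes (x_t x_t^\top)$ with $H_t$ a $K\times K$ matrix, so $A_t - A_{t-1}$ has rank at most $K$. Eigendecomposing $H_t = \sum_{i=1}^{K}\beta_i a_i a_i^\top$ writes the update as $UDU^\top$ with $U\in\reals^{Kd\times K}$ having columns $a_i\otimes x_t$, and Woodbury produces $A_t^{-1}$ from $A_{t-1}^{-1}$ in $O(K^3 d^2)$ time, dominated by the product $A_{t-1}^{-1}U$. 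Given $A_{t-1}^{-1}$, the entries of $M_t$ are the $K^2$ scalars $\tfrac{1}{2} x_t^\top [A_{t-1}^{-1}]_{ij} x_t$ on the $d\times d$ blocks of $A_{t-1}^{-1}$ (cost $O(K^2 d^2)$), and $v_t$ is one $(Kd)$-matrix-vector product followed by projection onto $I_K\otimes x_t^\top$ (also $O(K^2 d^2)$). Altogether the linear-algebra cost per step is $O(K^3 d^2)$.

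It remains to minimize $h$. From $A_{t-1}\succeq \lambda I_{Kd}$ and $\|x_t\|_2\leq R$ one has $M_t \preceq \tfrac{R^2}{2\lambda}I_K$, while the logistic Hessian $\diag(\sigma(u))-\sigma(u)\sigma(u)^\top$ always lies in $[0, I_K]$. The main obstacle is that $M_t^{-1}$ itself can have arbitrarily large eigenvalues, so a direct smoothness estimate on $h$ yields an uncontrolled condition number. I resolve this by preconditioning: the change of variable $\tilde u \defeq M_t^{-1/2}(u+v_t)$ transforms the problem to minimizing $\tilde h(\tilde u) = \tfrac{1}{2}\|\tilde u\|^2 + \tfrac{1}{K}\sum_{k}\ell(M_t^{1/2}\tilde u - v_t, k)$, whose Hessian $I_K + M_t^{1/2}(\diag(\sigma)-\sigma\sigma^\top)M_t^{1/2}$ lies in $[I_K, (1+\tfrac{R^2}{2\lambda})I_K]$. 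Gradient descent on $\tilde h$ therefore reaches $\tilde u$-accuracy $\eta$ in $O((1+R^2/\lambda)\log(\eta^{-1}))$ iterations, each of cost $O(K^2)$ (one $K\times K$ matrix-vector product plus a softmax). Translating $\tilde u$-error back to $u$-error inflates distances by at most $\|M_t^{1/2}\|\leq\sqrt{R^2/(2\lambda)}$, which only rescales $\eta$ inside the logarithm and is absorbed into the stated $\log(\epsilon^{-1}(1+R^2/\lambda))$ factor. Summing with the $O(K^3 d^2)$ linear-algebra cost yields the advertised bound.
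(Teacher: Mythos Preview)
Your proposal is correct and follows essentially the same route as the paper. Both arguments (i) reduce the $Kd$-dimensional FTRL step to a $K$-dimensional convex problem in $u=Wx_t$ whose quadratic part is governed by the $K\times K$ matrix $M_t=\tfrac{1}{2}(I_K\otimes x_t^\top)A_{t-1}^{-1}(I_K\otimes x_t)$ (the paper's $\tilde A$), (ii) precondition by $M_t^{-1/2}$ so that the resulting objective is $1$-strongly convex and $(1+R^2/\lambda)$-smooth, and (iii) maintain $A_{t-1}^{-1}$ across rounds via a rank-$K$ Woodbury update in $O(K^3d^2)$ time. The only cosmetic difference is that you obtain the reduced problem via a Lagrangian for the constraint $Wx_t=u$, whereas the paper reads off the fixed-point equation $z_t=\tilde g-\tilde A\,\sigma(z_t)$ directly from first-order optimality and then recognizes it as the stationarity condition of the same $K$-dimensional objective; one small point you leave implicit is a polynomial bound on the initial suboptimality (the paper handles this via the PL inequality with the explicit start $\tilde z_0=\tilde A^{-1/2}\tilde g$), which is what makes the logarithmic factor come out as $\log(\epsilon^{-1}(1+R^2/\lambda))$ rather than depending on an unspecified starting distance.
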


Compared to \citet{foster2018logistic}, our algorithm is significantly faster, but the improved running time comes at the price of a linear dependence on $BR$ instead of logarithmic.

\begin{remark}
Note that, FOLKLORE is stated with $W_t$ being the true minimizer in \eqref{eq:folkore-opt}. However it can be seen from our analysis that it is sufficient to solve the problem upto additive error $\mathrm{poly(1/T)}$. Since Theorem \ref{thm:runtime} shows a logarithmic dependence on this error for running time, exact minimization is assumed for brevity through the rest of the paper.
\end{remark}

\subsection{Analysis}
Our analysis proceeds by first recalling the decomposition of the regret into a sum of \textit{instantaneous regret} terms obtained by \cite{jezequel20a}. While this fact was proved by \cite{jezequel20a}, for completeness we provide a concise proof in Appendix (Section \ref{sec:main_proofs}).
\begin{lemma}
\label{lem:skewed ftrl regret}
Let $\hat{\ell_t}$ be the surrogate quadratic lower bound of the loss function which ensure $\hat\ell_t(W_t)=\ell_t(W_t)$ and $\hat\ell_t(W)\leq \ell_t(W)$ for all $W\in\mathcal{W}$. Then for any $W \in \W$, the regret of Algorithm~\ref{alg: ftrl} for any sequence of regularizers $(\phi_t)_{t=1}^T$ is bounded by 
\begin{align*}
    \regret(W) \leq \lambda \norm{W}_F^2+\frac{1}{4}\sum_{t=1}^T\left[\norm{\nabla \phi_t(W_t)-\nabla \ell_t(W_t)}^2_{ A_t^{-1}}-\norm{\nabla \phi_t(W_t)}^2_{A_{t-1}^{-1}}\right]\,.
\end{align*}
\end{lemma}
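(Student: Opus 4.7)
The plan is to combine the surrogate lower-bound property with a quadratic-FTRL potential argument, carefully tracking the terms introduced by the per-step regularizer $\phi_t$. Since $\hat\ell_t(W_t)=\ell_t(W_t)$ and $\hat\ell_t(W)\leq \ell_t(W)$, we immediately get $\regret(W)\leq \sum_{t=1}^T[\hat\ell_t(W_t)-\hat\ell_t(W)]$, reducing the problem to bounding the surrogate regret. To do so I introduce the cumulative potential $\Phi_t(W) \defeq \lambda\|W\|_F^2+\sum_{s\leq t}\hat\ell_s(W)$, which is quadratic with Hessian exactly $2A_t$, and decompose $\hat\ell_t(W_t)-\hat\ell_t(W) = [\Phi_t(W_t)-\Phi_t(W)]-[\Phi_{t-1}(W_t)-\Phi_{t-1}(W)]$.

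First-order optimality of $W_t$ in $\Phi_{t-1}+\phi_t$ gives $\nabla\Phi_{t-1}(W_t)=-p_t$, where $p_t\defeq\nabla\phi_t(W_t)$, so the quadratic identity for $\Phi_{t-1}$ produces $\Phi_{t-1}(W_t)-\Phi_{t-1}(W) = \langle p_t, \overrightarrow{W}-\overrightarrow{W}_t\rangle - \|\overrightarrow{W}-\overrightarrow{W}_t\|^2_{A_{t-1}}$. Letting $V_t\defeq\argmin\Phi_t$, combining $\nabla\Phi_t(W_t)=g_t-p_t$ (with $g_t\defeq\nabla\ell_t(W_t)$) and $\nabla\Phi_t(V_t)=0$ yields the key identity $\overrightarrow{V}_t-\overrightarrow{W}_t=\tfrac{1}{2}A_t^{-1}(p_t-g_t)$, from which $\Phi_t(W_t)-\Phi_t(W) = \tfrac{1}{4}\|p_t-g_t\|^2_{A_t^{-1}}-\|\overrightarrow{W}-\overrightarrow{V}_t\|^2_{A_t}$. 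The same optimality argument applied to $W_{t+1}=\argmin[\Phi_t+\phi_{t+1}]$ yields $\overrightarrow{W}_{t+1}=\overrightarrow{V}_t-\tfrac{1}{2}A_t^{-1}p_{t+1}$, so expanding $\|\overrightarrow{W}-\overrightarrow{V}_t\|^2_{A_t}$ around $\overrightarrow{W}_{t+1}$ converts it into $\|\overrightarrow{W}-\overrightarrow{W}_{t+1}\|^2_{A_t} - \langle p_{t+1}, \overrightarrow{W}-\overrightarrow{W}_{t+1}\rangle + \tfrac{1}{4}\|p_{t+1}\|^2_{A_t^{-1}}$ for $t<T$; for $t=T$ I simply drop $-\|\overrightarrow{W}-\overrightarrow{V}_T\|^2_{A_T}\leq 0$.

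Summing over $t$, the $\|\overrightarrow{W}-\overrightarrow{W}_t\|^2_{A_{t-1}}$ terms telescope to $\|\overrightarrow{W}-\overrightarrow{W}_1\|^2_{A_0}$; the linear $p$-cross-terms re-index so that $\sum_{t=2}^T\langle p_t,\overrightarrow{W}-\overrightarrow{W}_t\rangle$ cancels all but the $t=1$ piece of $-\sum_t\langle p_t, \overrightarrow{W}-\overrightarrow{W}_t\rangle$; and the $\tfrac{1}{4}\|p_{t+1}\|^2_{A_t^{-1}}$ contributions re-index to $\tfrac{1}{4}\sum_{t=2}^T\|p_t\|^2_{A_{t-1}^{-1}}$. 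Finally, the explicit optimality $\overrightarrow{W}_1=-\tfrac{1}{2}A_0^{-1}p_1$ (from $W_1=\argmin[\lambda\|W\|_F^2+\phi_1(W)]$) collapses $\|\overrightarrow{W}-\overrightarrow{W}_1\|^2_{A_0}-\langle p_1,\overrightarrow{W}-\overrightarrow{W}_1\rangle$ to $\lambda\|W\|_F^2-\tfrac{1}{4}\|p_1\|^2_{A_0^{-1}}$, supplying the missing $t=1$ term and yielding the claimed bound. The main obstacle is the bookkeeping: $\phi_t$ enters both through $p_t$ in the optimality of $W_t$ and through $p_{t+1}$ in the shift between $V_t$ and $W_{t+1}$, and these two occurrences must be coordinated so that the boundary terms at $t=1$ absorb cleanly into $\lambda\|W\|_F^2$ and the $t=T$ tail remains nonpositive.
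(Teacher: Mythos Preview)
Your proof is correct and rests on the same ingredients as the paper's: the surrogate lower-bound inequality, the quadratic structure of the cumulative potential $\Phi_t=\hat L_t$ with Hessian $2A_t$, and the first-order optimality condition $\nabla\Phi_{t-1}(W_t)=-\nabla\phi_t(W_t)$.

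The organization differs. You telescope $\hat\ell_t(W_t)-\hat\ell_t(W)$ against the comparator $W$, which forces you to track the terms $\|\overrightarrow W-\overrightarrow W_t\|^2_{A_{t-1}}$, $\langle p_t,\overrightarrow W-\overrightarrow W_t\rangle$, and then to shuttle between $V_t$ and $W_{t+1}$ via $\overrightarrow W_{t+1}=\overrightarrow V_t-\tfrac12 A_t^{-1}p_{t+1}$ so that the cross-terms cancel after reindexing; the $t=1$ and $t=T$ boundaries are handled separately. The paper instead telescopes against the \emph{minimum} potential values $\hat L_t(\hat W_t)$ (your $\Phi_t(V_t)$) and applies the quadratic identity $f(x)-\min f=\tfrac12\|\nabla f(x)\|^2_{(\nabla^2 f)^{-1}}$ directly, which immediately yields
\[
\hat L_t(W_t)-\hat L_t(\hat W_t)-\bigl[\hat L_{t-1}(W_t)-\hat L_{t-1}(\hat W_{t-1})\bigr]
=\tfrac14\|\nabla\phi_t(W_t)-\nabla\ell_t(W_t)\|^2_{A_t^{-1}}-\tfrac14\|\nabla\phi_t(W_t)\|^2_{A_{t-1}^{-1}}
\]
with no reindexing or boundary analysis needed. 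Your route is a valid alternative FTRL telescoping; the paper's is shorter because it avoids ever introducing $W_{t+1}$ into the computation for step $t$.
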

Given this decomposition, the goal of selecting a suitable $\phi_t$ is to minimize the \textit{instantaneous regret} defined as 
\[\norm{\nabla \phi_t(W_t)-\nabla \ell_t(W_t)}^2_{ A_t^{-1}}-\norm{\nabla \phi_t(W_t)}^2_{A_{t-1}^{-1}}\,.\]

For the case of $K=2$, \cite{jezequel20a} choose the regularizer $\phi_t(W)$ as the function $\ell(Wx_t, 0) + \ell(Wx_t, 1)$. A simple calculation shows that a similar approach with the natural extension $\sum_k \ell(Wx_t, k)$ does not quite work (see also Section~\ref{sec:hessian-dominance-failure} for a more general discussion). We propose to take an alternative approach. Since we do not know the loss $\ell_t$ ahead of time, we aim at finding the $\phi_t$ that minimizes the worst case instantaneous regret.
For arbitrary values of $\nabla \ell_t(W_t)$, this minimax problem does not have a finite solution because the instantaneous regret is quadratic in $\nabla \ell_t(W_t)$.
The key insight for logistic regression is that the loss function is a convex combination over the finite set $\{\ell(\cdot;x_t,\be_k)\,\vert\, k\in[K]\}$. We will focus over these $K$ ``base'' loss functions first. For these $K$ functions, there are only $K$ possible values of $\nabla \ell_t(W_t)$. 
This allows to recast the quadratic problem in  $\nabla \ell_t(W_t)$ as a linear problem, for which a minimax solution exists. The following lemma summarizes this idea of linearization: 
\begin{lemma}
\label{lem:linearization}
For any $W$, any positive definite matrix $A \succ 0$, any regularization function $\phi_t$, and any $\ell_t(W) \in \{\ell(Wx_t,y)\,\vert\,y\in \Delta_K\}$, the instantaneous regret decomposes as
\begin{align*}
    \norm{\nabla \phi_t(W)-\nabla \ell_t(W)}^2_{A^{-1}}-\norm{\nabla &\phi_t(W)}^2_{A^{-1}} \\
    &\leq -2\ip{A^{-1} \nabla\ell_t(W), \nabla \phi_t(W) - b_t(W,A)} + \operatorname{Tr}(A^{-1} \nabla^2\ell_t(W))\,,
\end{align*}
where 
\[
b_t(W;A) =  \sigma(Wx_t)\otimes x_t - \frac{1}{2}A\diag_\otimes(A^{-1})(\mathbf{1}_K\otimes x_t) \,,
\]
and $\diag_\otimes$ denotes the operator that sets all matrix entries besides those corresponding to the $K$-many $d\times d$ blocks on the diagonal to $0$.
\end{lemma}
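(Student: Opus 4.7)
The plan is to first rewrite the instantaneous regret via a standard polar identity, then reduce the resulting inequality from all $y \in \Delta_K$ to the $K$ extreme points $y = e_k$ by a convexity argument, and finally verify those base cases by a direct block-Kronecker computation. Using $\|u-v\|^2_{A^{-1}} - \|v\|^2_{A^{-1}} = -2\ip{A^{-1}u, v} + \|u\|^2_{A^{-1}}$, the left-hand side of the claim equals $-2\ip{A^{-1}\nabla\ell_t(W), \nabla\phi_t(W)} + \|\nabla\ell_t(W)\|^2_{A^{-1}}$, so the lemma is equivalent to
\[
\|\nabla\ell_t(W)\|^2_{A^{-1}} - 2\ip{A^{-1}\nabla\ell_t(W), b_t(W,A)} \leq \operatorname{Tr}\bigl(A^{-1}\nabla^2\ell_t(W)\bigr).
\]

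Next I parametrize $\ell_t(W) = \ell(Wx_t, y)$ for $y \in \Delta_K$. Setting $p = \sigma(Wx_t)$, the softmax calculus gives $\nabla\ell_t(W) = (p - y)\otimes x_t$, which is affine (linear) in $y$, while $\nabla^2\ell_t(W) = (\diag(p) - pp^\top)\otimes x_t x_t^\top$ and $b_t(W,A)$ are independent of $y$. Hence the right-hand side of the reduced inequality is linear in $y$, whereas the left-hand side is a convex quadratic in $y$ (a PSD quadratic form composed with an affine map, plus a linear term). Writing $y = \sum_k y_k e_k$ and applying Jensen's inequality, it suffices to prove equality at each vertex, i.e.\ for $\nabla\ell_t^{(k)}(W) := (p - e_k)\otimes x_t$, the identity
\[
\|\nabla\ell_t^{(k)}(W)\|^2_{A^{-1}} - 2\ip{A^{-1}\nabla\ell_t^{(k)}(W), b_t(W,A)} = \operatorname{Tr}\bigl(A^{-1}\nabla^2\ell_t(W)\bigr) \qquad \forall k\in[K].
\]

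For the base cases I decompose $M = A^{-1}$ into $K\times K$ blocks $M^{ij}$ of size $d\times d$ and expand both sides block-by-block. The identity $\ip{A^{-1}u, Av} = \ip{u,v}$ applied to the second summand of $b_t(W,A)$ yields $\ip{A^{-1}\nabla\ell_t^{(k)}, A\diag_\otimes(M)(\mathbf{1}_K\otimes x_t)} = \ip{\nabla\ell_t^{(k)}, \diag_\otimes(M)(\mathbf{1}_K\otimes x_t)}$, producing only diagonal-block contributions $\sum_i (p_i - \delta_{ik})\, x_t^\top M^{ii} x_t$. The remaining terms on both sides are of the form $\sum_{i,j}\alpha_{ij}\, x_t^\top M^{ij} x_t$, and using the symmetry $x_t^\top M^{ij} x_t = x_t^\top M^{ji} x_t$ (which follows from $M^{ij} = (M^{ji})^\top$), the ``$-e_k$'' cross terms from $\|\nabla\ell_t^{(k)}\|^2_{A^{-1}}$ and from the first summand of $b_t$ cancel exactly, leaving $\sum_i p_i\, x_t^\top M^{ii} x_t - \sum_{i,j} p_i p_j\, x_t^\top M^{ij} x_t$, which is precisely $\operatorname{Tr}(A^{-1}\nabla^2\ell_t(W))$ by the block-trace formula for the Kronecker product.

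The main obstacle I expect is the bookkeeping in this last step: one must carefully fix the vectorization convention for $\otimes$, keep track of what $\diag_\otimes$ zeroes out (off-diagonal $d\times d$ blocks, not off-diagonal scalar entries), and use the symmetry of $M$ to pair up the cross terms so that the coefficient of $x_t^\top M^{ik} x_t$ on the LHS vanishes. Once the block expansion is set up correctly the algebra collapses, and combined with the convexity reduction the lemma follows.
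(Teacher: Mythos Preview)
Your proposal is correct and follows essentially the same route as the paper: both reduce the claim via the polar identity to $\|\nabla\ell_t(W)\|^2_{A^{-1}} - 2\ip{A^{-1}\nabla\ell_t(W), b_t(W,A)} \leq \operatorname{Tr}(A^{-1}\nabla^2\ell_t(W))$, use Jensen's inequality to pass from $y\in\Delta_K$ to the vertices $y=\be_k$, and then verify equality at the vertices by a direct computation. The only cosmetic difference is that the paper organizes the vertex computation through trace identities (rewriting $\|(\sigma_t-\be_y)\otimes x_t\|_{A^{-1}}^2$ as a sum of trace terms and recognizing $\operatorname{Tr}(\nabla^2\ell_t(W)A^{-1})$ among them), whereas you expand $A^{-1}$ blockwise and match coefficients of $x_t^\top M^{ij}x_t$; the underlying algebra is identical.
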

The proof is delegated to the Appendix (Section \ref{sec:main_proofs}). Given the above lemma, the natural minimax solution for $\phi_t$ is to ensure for all $W$
\begin{equation}
\label{eq:opt skew condition}
    \nabla \phi_t(W)-b_t(W;A) = 0\,,
\end{equation}
in which case the instantaneous regret reduces to $\operatorname{Tr}(A^{-1} \nabla^2\ell_t(W_t))$. Further note that the LHS in Lemma \ref{lem:linearization} upper bounds the instantaneous regret with both choices $A = A_{t-1}$ and $A=A_t$. In terms of regret, the better choice is selecting $A=A_t$ due to the telescoping sum.
Unfortunately, $A_t$ depends on $W_t$ which makes finding the right $\phi_t$ computationally expensive\footnote{This approach requires solving a fixed point equation.}. Therefore we continue by choosing $A=A_{t-1}$. The following lemma (proved in the Appendix (Section \ref{sec:main_proofs})) shows how to choose $\phi_t$ to satisfy \eqref{eq:opt skew condition}.  
\begin{lemma}
\label{lem: optimal skew}
Setting 
\begin{align*}
    B_t = \frac{1}{K}\mathbf{1}_K\otimes x_t -  \frac{1}{2}A_{t-1}\diag_\otimes(A_{t-1}^{-1})(\mathbf{1}_K\otimes x_t),
\end{align*}
in the definition of $\phi_t$ \eqref{eq: skew def} ensures that for all $W$ we have that $\nabla \phi_t(W)=b_t(W;A_{t-1})$\,.
\end{lemma}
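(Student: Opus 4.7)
The plan is a direct gradient computation using only the definitions of $\phi_t$ and $b_t$. The key fact is that for the multiclass logistic loss $\ell(z,k) = -z_k + \log\sum_j e^{z_j}$, one has $\nabla_z \ell(z,k) = \sigma(z) - \be_k$, where $\be_k$ denotes the $k$-th standard basis vector in $\reals^K$. Applying the chain rule with respect to $W$ and vectorizing gives
\begin{align*}
\nabla_{\overrightarrow{W}}\, \ell(Wx_t, k) = (\sigma(Wx_t) - \be_k)\otimes x_t\,.
\end{align*}

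Averaging this identity over $k\in[K]$ and using $\frac{1}{K}\sum_{k=1}^K \be_k = \frac{1}{K}\mathbf{1}_K$ yields
\begin{align*}
\nabla_{\overrightarrow{W}}\!\left(\frac{1}{K}\sum_{k=1}^K \ell(Wx_t,k)\right) = \sigma(Wx_t)\otimes x_t - \frac{1}{K}\mathbf{1}_K\otimes x_t\,.
\end{align*}
The gradient of the linear term $\langle \overrightarrow{W}, B_t\rangle$ is simply $B_t$, so combining the two pieces gives
\begin{align*}
\nabla\phi_t(W) = \sigma(Wx_t)\otimes x_t - \frac{1}{K}\mathbf{1}_K\otimes x_t + B_t\,.
\end{align*}

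Finally, I would plug in the prescribed choice of $B_t$. The $-\frac{1}{K}\mathbf{1}_K\otimes x_t$ term coming from the averaged loss exactly cancels the $+\frac{1}{K}\mathbf{1}_K\otimes x_t$ term in $B_t$, leaving
\begin{align*}
\nabla\phi_t(W) = \sigma(Wx_t)\otimes x_t - \tfrac{1}{2} A_{t-1}\diag_\otimes(A_{t-1}^{-1})(\mathbf{1}_K\otimes x_t) = b_t(W;A_{t-1})\,,
\end{align*}
which is the claimed identity. There is no real obstacle here: the role of the $\frac{1}{K}\mathbf{1}_K\otimes x_t$ inside $B_t$ is precisely to neutralize the uniform-over-labels average appearing in the first part of $\phi_t$, and the remaining term in $B_t$ supplies the $\diag_\otimes$ correction demanded by $b_t$. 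The only care needed is to keep the Kronecker vectorization conventions consistent between $\nabla_{\overrightarrow{W}}\ell$ and the definition of $b_t$, which the computation above does by treating gradients as elements of $\reals^{Kd}$ throughout.
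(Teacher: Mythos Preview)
Your proof is correct and follows exactly the same approach as the paper's own proof: compute $\nabla\phi_t(W)$ by averaging the gradient formula $\nabla\ell(Wx_t,\be_k)=(\sigma(Wx_t)-\be_k)\otimes x_t$ over $k$, add $B_t$, and observe the cancellation that leaves precisely $b_t(W;A_{t-1})$. The paper's argument is simply a more condensed version of what you wrote.
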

Combining the ideas above we are now ready to prove the main regret bound provided in Theorem \ref{thm: main regret}.
\begin{proof}\textbf{of Theorem~\ref{thm: main regret}}. 
Combining Lemma~\ref{lem:skewed ftrl regret},\ref{lem:linearization},\ref{lem: optimal skew} and the fact that for all $W \in \W$, we have that $\norm{W}_F^2\leq KB^2$ directly leads to the following conclusion that for all $W \in \W$,
\begin{align*}
    \regret(W) \leq \lambda KB^2 + \sum_{t=1}^T\operatorname{Tr}(A_{t-1}^{-1} \nabla^2\ell_t(W_t)).
\end{align*}
We have by definition \[A_{t}=\lambda \bI + \frac{1}{BR+\ln(K)/2}\sum_{s=1}^{t}\nabla^2\ell_s(W_s).\]
For convenience of analysis we define a slightly different series 
\[\tilde{A}_{t}=\frac{\lambda}{2} \bI + \frac{1}{BR+\ln(K)/2}\sum_{s=1}^{t}\nabla^2\ell_s(W_s).\]
In Lemma~\ref{lem: hessian upper bound} in the appendix, we prove that for any $t$
\begin{align*}
    \nabla^2\ell_t(W_t)\preceq R^2 \bI\,,
\end{align*}and therefore by using $\lambda = \frac{2R}{B}$, we have
\[
A_{t-1}\succeq A_{t}-\frac{\lambda}{2}\bI = \tilde{A}_t.
\]
Using the concavity of the $\log \det$ function, which in particular implies for any two positive definite matrices $A,B$,  $\operatorname{Tr}(A^{-1}(A-B))\leq\log\det(A)-\log\det(B) $, we get that
\begin{align*}
\regret & \leq \lambda KB^2 + \sum_{t=1}^T\operatorname{Tr}(A_{t-1}^{-1} \nabla^2\ell_t(W_t)) \\
&\leq \lambda KB^2 + \sum_{t=1}^T\operatorname{Tr}(\tilde{A}_{t}^{-1} \nabla^2\ell_t(W_t)) \\
&= \lambda KB^2 + (BR + \ln(K)/2)\sum_{t=1}^T\operatorname{Tr}\left(\tilde{A}_{t}^{-1} \frac{\nabla^2\ell_t(W_t)}{BR + \ln(K)/2}\right)\\
&\leq \lambda KB^2 + (BR + \ln(K)/2)\sum_{t=1}^T \left(\log \det(\tilde{A}_{t}) - \log \det(\tilde{A}_{t-1})\right) \\
&\leq \lambda KB^2 + (BR + \ln(K)/2)\left(\log \det(\tilde{A}_{T}) - \log \det(\tilde{A}_{0})\right) \\
&= \lambda KB^2 + (BR + \ln(K)/2)\log \det\left(I + \frac{2}{\lambda(BR + \ln(K)/2)}\sum_{t=1}^T \nabla^2 \ell_t(W)\right) \\
&\leq \lambda KB^2 + (BR + \ln(K)/2)\log \det\left((1 + T)\bI \right) \\
&\leq K(2BR+ (BR+\ln(K)/2)(d\ln(1+T)))\\
\end{align*}

\end{proof}

\section{Applications}
\label{sec:applications}
In this section, we show that all applications of logistic loss with linear function classes listed in \citep{foster2018logistic} are solvable with our algorithm.

\subsection{Bandit Multiclass Learning}

The next application of our techniques is the bandit multiclass problem. This problem, first studied by \citet{kakade2008efficient}, considers the protocol of online multiclass learning in Section \ref{sec:prelims} with nature choosing $y_t \in [K]$ in each round, but with the added twist of bandit feedback: in each round, the learner predicts a class $\hat y_t$ sampled from some probability distribution $p_t$ that it computes, and receives feedback on whether the prediction was correct or not, i.e. $\mathbb{I}[\hat y_t \neq y_t]$. The goal is to minimize regret with respect to a reference class of linear predictors, using some appropriate surrogate loss function for the 0-1 loss. In most recent papers, regret bounds for this problem have been replaced by the weaker notion of relative mistake bounds (i.e. a bound on the number of mistakes discounted by the minimal logistic loss of a linear predictor). 

% \citet{kakade2008efficient} used the multiclass hinge loss $\ell_\text{hinge}(W, (x_t, y_t)) = \max_{k \in [K] \setminus \{y_t\}}[1 + \langle W_k, x_t \rangle - \langle W_{y_t}, x_t\rangle]_+$ and gave an algorithm based on the multiclass Perceptron algorithm achieving $O(T^{2/3})$ regret. For a Lipschitz continuous surrogate loss function, running the EXP4 algorithm \citep{auer2002nonstochastic} on a suitable discretization of the space of all linear predictors obtains $\tilde{O}(\sqrt{T})$ regret, albeit very inefficiently (with exponential dependence on the dimension). %\skedit{(Is this true even for Lipschitz continuous loss functions? The feedback doesn't actually give us the value of the loss function for mispredictions.)} 
In COLT 2009, \citet{abernethyR09a} posed the open problem of obtaining an {\em efficient} algorithm for the problem with $O(\sqrt{T})$ regret using the multiclass logistic loss as the surrogate loss. \citet{hazan2011newtron} solved the open problem and obtained an algorithm, Newtron,  based on the Online Newton Step algorithm \citep{hazan2007logarithmic} with $\tilde{O}(\sqrt{T})$ regret for the case when norm of the linear predictors scales at most logarithmically in $T$. This result was improved by \citet{foster2018logistic} who developed an algorithm called OBAMA based on the Aggregating Algorithm that obtains $\tilde{O}(\sqrt{T})$ {\em relative mistake} bound across all paramater ranges, but has poor running time. In a beautiful paper, \citet{Hoeven20} managed to get a $O(\sqrt{T})$ relative mistake bound for this problem with excellent running time -- $O(dK)$ per iteration -- by exploiting the gap between the 0-1 loss and the multiclass logistic loss. Relative mistake bounds were also obtained for other surrogate losses: e.g. multiclass hinge loss by \cite{kakade2008efficient}, and for a special family of quadratic loss functions by \cite{beygelzimerOZ17}.
% \citet{beygelzimerOZ17} also solved the open problem presenting a different algorithm called SOBA. SOBA is analyzed using a different family of surrogate loss functions parameterized by a scalar $\eta \in [0, 1]$ with $\eta = 0$ corresponding to the hinge loss and $\eta = 1$ corresponding to the squared hinge loss. For all values of $\eta \in [0, 1]$, SOBA simultaneously obtains relative bound mistake bounds of $\tilde{O}(\frac{1}{\eta}\sqrt{n})$ with the comparator's loss measured with respect to the corresponding loss function. 

We now give a generic reduction showing that any regret bound for the (full-information) online logistic regression implies a relative mistake bound for bandit multiclass learning. By using \NAME as the base algorithm for this reduction, we obtain another bandit multiclass prediction algorithm with $\tilde{O}(\sqrt{T})$ relative mistake bound with a practical running time. Algorithm~\ref{alg:bandit} gives the reduction. It assumes that it is given an algorithm for online logistic regression, $\Alg$, with the interface $\Alg$.Initialize() to initialize the algorithm, $\Alg.\update(x, y)$ to update its internal state with the example $(x, y)$, and $\Alg.\predict(x)$ to predict logits for class probabilities given an input $x$ with the current internal state. The reduction is very simple and based on an exploration-exploitation split: in each round the algorithm samples a Bernoulli random variable with parameter $\gamma$ (denoted $\text{Ber}(\gamma)$), and if the outcome is $1$, it chooses a class uniformly at random, and if the predicted class is correct, updates $\Alg$. Otherwise, the algorithm predicts using the current internal state of $\Alg$. The following theorem (proved in the Appendix (Section \ref{sec:main_proofs}) provides the regret bound for this reduction. 
\begin{algorithm}
    \textbf{input}: logistic regression algorithm $\Alg$,$\gamma\in[0,1]$\\
    $\Alg$.Initialize()\\
    \For{$t=1, \dots,T$}{
        \If{$\text{Ber}(\gamma)=1$}{
            Predict $\hat y_t\sim \operatorname{unif}_K$.\\
            \If{$\hat y_t=y_t$}{
                $\Alg.\update(x_t,y_t)$
            }
        }
        \Else{
            Predict $\hat y_t\sim \sigma(\Alg.\predict(x_t))$
        }
    }
    \caption{Bandit multiclass reduction}  \label{alg:bandit}
\end{algorithm}

\begin{theorem}
\label{thm: bandit}
For an online multiclass logistic regression algorithm which satisfies for all loss sequences $\ell_t$ and comparators $W\in\mathcal{W}$:
\begin{align*}
    \sum_{t=1}^T \ell_t(W_t)-\ell_t(W) \leq \mathcal{R}(T)\,,
\end{align*}
where $\mathcal{R}(T)$ is a monotonous upper bound function, Algorithm~\ref{alg:bandit} with input $\gamma = \sqrt{\frac{K\mathcal{R}(T)}{T}}$ satisfies
\begin{align*}
    \mathbb{E}\left[\sum_{t=1}^T\mathbb{I}\{\hat y_t \neq y_t\}\right] \leq \min\{\min_{W\in\mathcal{W}}\sum_{t=1}^T\ell(Wx_t,y_t) + \sqrt{KT\mathcal{R}(T)},T\}\,.
\end{align*}
\end{theorem}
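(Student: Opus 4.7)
}
The plan is to decompose the expected mistake count round-by-round according to whether the algorithm explores or exploits, and to pay for the exploitation term with the base algorithm's regret bound via a standard importance-weighting argument.

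First I would fix the filtration $\mathcal{F}_{t-1}$ generated by the history up through round $t-1$, and let $W_t$ denote the (random, $\mathcal{F}_{t-1}$-measurable) internal state of $\Alg$ used to form $\Alg.\predict(x_t)$. Conditioning on the Bernoulli coin flip, the per-round mistake probability splits as
\begin{align*}
\Pr[\hat y_t\neq y_t\mid \mathcal{F}_{t-1}] \;=\; \gamma\!\left(1-\tfrac{1}{K}\right) + (1-\gamma)\bigl(1-[\sigma(W_t x_t)]_{y_t}\bigr).
\end{align*}
Using the elementary inequality $1-p\le -\ln p$ applied to $p=[\sigma(W_t x_t)]_{y_t}$ turns the exploitation contribution into $(1-\gamma)\,\ell_t(W_t)$, so summing over $t$ and taking expectations yields
\begin{align*}
\mathbb{E}\!\left[\sum_{t=1}^T \mathbb{I}\{\hat y_t\neq y_t\}\right] \;\le\; \gamma T + (1-\gamma)\,\mathbb{E}\!\left[\sum_{t=1}^T \ell_t(W_t)\right].
\end{align*}

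The second step is to relate the unobserved sum $\sum_t \ell_t(W_t)$ to what $\Alg$ actually sees. Let $u_t\defeq \mathbb{I}\{\text{Ber}(\gamma)_t=1 \wedge \hat y_t = y_t\}$ indicate that $\Alg$ is updated at round $t$. Because the uniform exploration class is independent of $y_t$ and of $\mathcal{F}_{t-1}$, we have $\mathbb{E}[u_t\mid \mathcal{F}_{t-1}]=\gamma/K$, so $\tfrac{K}{\gamma} u_t$ is an unbiased estimator of $1$. Since $W_t$ is $\mathcal{F}_{t-1}$-measurable, this gives $\mathbb{E}[\ell_t(W_t)] = \tfrac{K}{\gamma}\,\mathbb{E}[u_t\,\ell_t(W_t)]$, and similarly $\sum_t \ell(Wx_t,y_t) = \tfrac{K}{\gamma}\,\mathbb{E}[\sum_t u_t\,\ell(Wx_t,y_t)]$ for any fixed comparator $W\in\mathcal{W}$. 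Applying the hypothesized regret bound of $\Alg$ to the subsequence of rounds where $u_t=1$ (of random length at most $T$, and invoking monotonicity of $\mathcal{R}$) yields $\sum_t u_t\,\ell_t(W_t) \le \sum_t u_t\,\ell(Wx_t,y_t) + \mathcal{R}(T)$. Multiplying through by $K/\gamma$ and taking expectations gives
\begin{align*}
\mathbb{E}\!\left[\sum_{t=1}^T \ell_t(W_t)\right] \;\le\; \sum_{t=1}^T \ell(Wx_t,y_t) + \tfrac{K}{\gamma}\mathcal{R}(T).
\end{align*}

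Combining the two displays produces the upper bound $\gamma T + \sum_t \ell(Wx_t,y_t) + \tfrac{K}{\gamma}\mathcal{R}(T)$, minimized (over $W$ and then over $\gamma$) at $\gamma=\sqrt{K\mathcal{R}(T)/T}$ with value $\min_{W\in\mathcal{W}}\sum_t \ell(Wx_t,y_t) + O(\sqrt{KT\mathcal{R}(T)})$. The trivial observation that mistakes are always at most $T$ yields the $\min\{\cdot,T\}$ in the statement.

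The only delicate step is the second one: one has to be careful that the subsequence on which $\Alg$ updates is of random length and that the regret bound $\mathcal{R}$ is invoked correctly. The monotonicity assumption on $\mathcal{R}$ is exactly what makes this work, since it lets us upper bound $\mathcal{R}(\sum_t u_t)$ by the deterministic quantity $\mathcal{R}(T)$ before taking expectations. Beyond this, the rest is straightforward algebra and an optimization in $\gamma$.
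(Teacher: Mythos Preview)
Your proposal is correct and follows essentially the same approach as the paper's proof: bound the expected mistakes by $\gamma T$ plus the exploitation term, apply $1-p\le -\ln p$, use the $K/\gamma$ importance-weighting identity to pass to the update subsequence, invoke the base regret bound, and optimize over $\gamma$. Your write-up is actually more careful than the paper's about the filtration and the role of monotonicity of $\mathcal{R}$; both arguments end up with $\gamma T + \sum_t \ell(Wx_t,y_t) + \tfrac{K}{\gamma}\mathcal{R}(T)$ and hence a constant factor in front of $\sqrt{KT\mathcal{R}(T)}$ that the theorem statement silently absorbs.
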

Applying Theorem~\ref{thm: bandit} to the regression algorithm of \citet{foster2018logistic} directly recovers their Theorem~6, but our reduction holds in more generality.
Combining the wrapper with Algorithm~\ref{alg: ftrl} directly leads to a relative mistake bound of
\[
\mathbb{E}[\sum_{t=1}^T\hat y_t\neq y_t] \leq \min_{W\in\mathcal{W}}\sum_{t=1}^T\ell_t(W) +\mathcal{O}\left(\min\{T,\sqrt{dK^2(BR+\ln(K))T\ln(BRT)}\}\right)\,.
\]
We can now compare this result with that of \citet{Hoeven20}. Their algorithm gets a relative mistake bound of $O(KBR\sqrt{T})$ with $O(dK)$ running time. Our algorithm gets a better dependence on $BR$, but comes at a cost of $O(\sqrt{d \log(T)})$ factor, and somewhat worse running time.

\subsection{Online Multiclass Boosting}
\label{sec:boosting}
The next application of our techniques is the problem of online multiclass boosting (OMB). We begin by describing first the basic setup of OMB, following closely the notation of \citet{jung2017online}.
In this problem, over a sequence of $T$ rounds indexed by $t$, the learner receives an instance $x_t\in \X$, then selects a class $\hat y_t \in [K]$, and finally observes the true class $y_t \in [K]$. The goal is to minimize the total number of mistakes
$\sum_{t=1}^{T}\mathbb{I}[\hat y_t\neq{}y_t]$.
Assisting the algorithm in this task are $N$ copies of a {\em weak learner} which abide by the following protocol.
At any time step $t=1,\dots,T$, the weak learners receive a feature vector $x_t$ and a cost-matrix $C_t\in\mathcal{C}$ (from some class of matrices $C \in \reals^{K \times K}$),
and returns a prediction $l_t \in [K]$.
Then the true label $y_t$ is revealed to the learner and the weak learner suffers the loss $C_t(y_t,l_t)$.
Following \citet{jung2017online}, we restrict $\mathcal{C}=\{C\in\reals^{K\times K}_+\,\mid\,\forall y\in[K]:\,C(y,y)=0\mbox{ and }\norm{C(y,\cdot)}_1\leq 1\}$.
Weak learners are characterized by the following property.
\begin{definition}[Weak learning condition \citep{jung2017online}]
An environment and a learner outputting $(l_t)_{t=1}^T$ satisfy the multiclass weak learning condition with edge $\gamma$ and sample complexity
$S$ if for all outcomes $(y_t)_{t=1}^T$ and cost matrices $(C_t)_{t=1}^T$ from the set $\mathcal{C}$ adaptively chosen by the environment, we have 
\[
\sum_{t=1}^TC_t(y_t ,l_t) \leq \sum_{t=1}^T \mathbb{E}_{k\sim u_{\gamma,y_t}}[C_t(y_t
, k)] + S\,,
\]
where $u_{\gamma,y}$ is the distribution that samples uniform over $[K]$ with probability $(1-\gamma)$ and otherwise picks the label $y$. 
\end{definition}

Given weak-learners as described above, the aim of multiclass boosting (using multiple instantiation of the weak learner) is to achieve arbitrarily low error/regret (measured by the number of mistakes made by the algorithm), keeping the number of samples received and the number of weak learners instantiated by the algorithm as low as possible. \citet{foster2018logistic} propose AdaBoost.OLM++ (Algorithm~\ref{alg:boosting_multiclass} in the Apendix), an extension of AdaBoost.OLM \citep{jung2017online} and AdaBoost.OL \citep{beygelzimer2015optimal} using their proposed algorithm for multiclass logistic regression as a subroutine. Overall they exhibit an algorithm that achieves a target error rate $\epsilon$, after seeing  $T = \tilde{\Omega}\left( \frac{1}{\epsilon\gamma^2} + \frac{KS}{\epsilon \gamma} \right)$  samples, leveraging at most $N = \tilde{O}\left(\frac{1}{\epsilon \gamma^2}\right)$-many weak learners with edge $\gamma$. However since \cite{foster2018logistic} instantiate a logistic regression instance for every weak learner, their algorithm suffers a very high computational overhead 
% of $\mathcal{O}\prn*{\max\{K^2,T\}^{12}N}$ 
per-step per weak-learner due to the use of the Aggregating Algorithm. In this section we demonstrate a more efficient reduction for the online multiclass boosting problem which reduces the problem a \textit{binary} logistic regression problem. As a result we can leverage our proposal \NAME or AIOLI achieving the same (up to $\log(T)$ factors) bounds as \cite{foster2018logistic} but significantly faster running time. 

 We now briefly revisit AdaBoost.OLM++ (detailed description and proofs found in Appendix~\ref{app:boosting}). The high level idea is to let each weak learner $i\in[N]$ iteratively improve the \emph{aggregated} logits predicted. In time step $t$, let $s_t^{i-1}$ denote the aggregation up to weak learner $i-1$, then the cost matrix of the $i$-th weak learner is determined by the gradient of the logistic loss of $s_t^{i-1}$ and its prediction $l_t$ shifts the aggregation by $s_t^i = s_t^{i-1}+\alpha_t^i \be_{l_t}$.
\citet{foster2018logistic} show that one obtains small overall error, as long as the adaptive aggregations $s_t^i = s_t^{i-1}+\alpha_t^i \be_{l_t}$ have small regret compared to the best fixed $\alpha^i$ in hindsight.

We formalize this intuition in the following.
\begin{definition}[Boosting regression problem]
The problem proceeds as a $T$-round problem where at every step $t$, the environment (potentially adversarially and adaptively) picks and reveals to the algorithm, a 
logit $s_t \in\reals^K$ and a label $l_t \in[K]$. The algorithm predicts a vector $\hat{s}_t \in \reals^K$. Then a true label picked by the environment $y_t \in[K]$ is revealed to the algorithm. The aim of the algorithm is to ensure the following type of regret bound:
\[\sum_{t=1}^T-\log([\sigma(\hat{s}_t)]_{y_t}) \leq \min_{\alpha\in[-2,2]}\sum_{t=1}^T -\log([\sigma( s_t+\alpha\be_{l_t}) ]_{y_t}) + \mathcal{R}(T)\,,\]
where $\mathcal{R}(T)$ is a regret function.
\end{definition}

With the above definition, we can generalize the guarantee for the AdaBoost.OLM++ algorithm \citep{foster2018logistic} (stated as Algorithm \ref{alg:boosting_multiclass} in the Appendix (Section \ref{app:boosting})) as in the following proposition (proof provided in the Appendix (Section \ref{app:boosting})).
\begin{proposition}[Variant of Theorem 8 and Proposition 9  \citep{foster2018logistic}]
\label{prop:adaboost}
Given an algorithm solving the boosting regression problem with regret $\mathcal{R}(T)$, the predictions $(\hat y_t)_{t=1}^T$ generated by Algorithm~\ref{alg:boosting_multiclass} satisfy with probability at least $1-\delta$,
\[
\sum_{t=1}^T\mathbb{I}\{\hat y_t\neq y_t\} = \mathcal{O}\left( \frac{T\log(K)+N\mathcal{R}(T)}{\sum_{i=1}^N\gamma_i^2}+\log(N/\delta)  \right)\,.
\]
Suppose further that all weak learners satisfy the weak learning condition with edge $\gamma$ and
sample complexity $S$, then with probability at least $1-\delta$, we have

\[
\sum_{t=1}^T\mathbb{I}\{\hat y_t\neq y_t\} = \mathcal{O}\left( \frac{T\log(K)}{N\gamma^2}+\frac{\mathcal{R}(T)}{\gamma^2}+\frac{KS}{\gamma} +\log(N/\delta) \right)\,.
\]
\end{proposition}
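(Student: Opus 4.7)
The plan is to adapt the potential-function argument underlying the AdaBoost.OLM++ analysis of \citet{foster2018logistic} to the setting where each of the $N$ weak learners is paired with an instance of the new boosting regression subroutine instead of a full multiclass logistic regression subroutine. Since Algorithm~\ref{alg:boosting_multiclass} spawns $N$ independent subroutines, one per weak learner, the boosting regression regret $\mathcal{R}(T)$ enters $N$ times additively. The proof has three conceptual steps: (i) a per-stage application of the boosting regression guarantee, (ii) a per-stage loss contraction obtained from the weak learning condition, and (iii) a conversion from aggregate logistic loss to mistakes.

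First I would set up per-stage potentials $L_t^i \defeq -\log[\sigma(s_t^i)]_{y_t}$, with $L_t^0 = \log K$ and the aggregation rule $s_t^i = s_t^{i-1} + \alpha_t^i \be_{l_t^i}$. Applying the boosting regression guarantee to the $i$-th subroutine, invoked on the sequence $(s_t^{i-1}, l_t^i)$ with true labels $y_t$, gives
\[
\sum_{t=1}^T L_t^i \;\leq\; \min_{\alpha\in[-2,2]} \sum_{t=1}^T -\log\!\big[\sigma(s_t^{i-1} + \alpha \be_{l_t^i})\big]_{y_t} \;+\; \mathcal{R}(T)\,.
\]
Next, I would invoke the weak learning condition to bound the inner minimum. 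Following \citet{jung2017online,foster2018logistic}, the cost matrix fed to the $i$-th weak learner is a normalization of $C_t^i(y,l)\propto \partial_\alpha(-\log[\sigma(s_t^{i-1}+\alpha\be_l)]_y)|_{\alpha=0}$, so that the weak learning condition implies an $\Omega(\gamma_i)$ inner product with the negative first-order gradient at $\alpha=0$. A second-order Taylor expansion of the logistic loss in $\alpha$ (whose one-dimensional restriction is a reparametrized binary logistic loss and therefore $O(1)$-smooth) then yields a per-stage multiplicative contraction
\[
\min_{\alpha\in[-2,2]}\sum_t -\log[\sigma(s_t^{i-1} + \alpha \be_{l_t^i})]_{y_t} \;\leq\; (1 - c\gamma_i^2)\sum_t L_t^{i-1} \;+\; (\text{sample-complexity term})\,,
\]
for an absolute constant $c$, with the sample-complexity contribution vanishing under the first statement of the proposition. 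Telescoping over $i=1,\dots,N$ and using $\prod_i(1-c\gamma_i^2)\leq \exp(-c\sum_i\gamma_i^2)\leq 1/(c\sum_i\gamma_i^2)$ produces
\[
\sum_t L_t^N \;=\; O\!\left(\frac{T\log K + N\mathcal{R}(T)}{\sum_i \gamma_i^2}\right) + (\text{sample-complexity contribution})\,.
\]

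Finally, to convert this bound on $\sum_t L_t^N$ into a mistake bound, I would use that a prediction mistake at round $t$ forces $[\sigma(s_t^N)]_{y_t}\leq 1/2$, and hence $\mathbb{I}\{\hat y_t\neq y_t\}\leq L_t^N/\log 2$. A Freedman-type concentration inequality applied to $\sum_t(\mathbb{I}\{\hat y_t\neq y_t\}-\mathbb{E}[\mathbb{I}\{\hat y_t\neq y_t\}\mid\mathcal{F}_{t-1}])$ upgrades the expected-mistake bound into a high-probability one, contributing the additive $\log(N/\delta)$ with probability at least $1-\delta$ via a union bound across the $N$ aggregation stages. Plugging $\gamma_i=\gamma$ and $S_i=S$ into the general bound and tracking how the sample-complexity term accumulates across stages (it contributes $O(KS/\gamma)$ after dividing by $\sum_i\gamma_i^2$) recovers the second statement. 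The main obstacle is the per-stage contraction: faithfully reproducing the AdaBoost.OLM++ Taylor expansion so that the gradient-based weak-learning guarantee translates into a multiplicative decrease of the logistic potential, and verifying that the optimizer in $\alpha$ actually lies inside $[-2,2]$ so that the boosting regression guarantee is binding. Both verifications are carried out by \citet{foster2018logistic}, and the reduction inherits them essentially verbatim; the only substitution is the source of the regret term $\mathcal{R}(T)$.
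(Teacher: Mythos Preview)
Your outline has the right high-level structure, but the central step---the ``per-stage multiplicative contraction''---does not go through, and this is where the paper's argument genuinely differs from what you sketch.

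Applying the boosting-regression guarantee together with the exponential upper bound on the logistic increment (the paper's Lemma~\ref{lem:logistic_ub} and Lemma~\ref{lem:logistic_inf}) gives
\[
\sum_t L_t^i - \sum_t L_t^{i-1}\ \le\ -\tfrac{1}{2}\,w^i\gamma_i^2 + \mathcal{R}(T),\qquad w^i \defeq \sum_t\bigl(1-[\sigma(s_t^{i-1})]_{y_t}\bigr).
\]
For your multiplicative contraction $\sum_t L_t^i\le (1-c\gamma_i^2)\sum_t L_t^{i-1}+\mathcal{R}(T)$ you would need $w^i\ge c'\sum_t L_t^{i-1}$, i.e.\ $1-p\ge c'(-\log p)$ uniformly in $p\in(0,1)$. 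This fails as $p\to 0$, so no such constant exists; the logistic potential can be arbitrarily larger than $w^i$. Consequently the product bound $\prod_i(1-c\gamma_i^2)$ and the resulting control of $\sum_t L_t^N$ are unjustified.

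The paper sidesteps this by never trying to bound $\sum_t L_t^N$. Instead it first applies the multiplicative-weights/Freedman step (Lemma~\ref{lem:multiplicative_weights_conc}) to get $\sum_t\mathbb{I}\{\hat y_t\neq y_t\}\le 4\min_i M_i + 2\log(N/\delta)$, where $M_i$ is the mistake count of expert $i$. Then it links $w^i$ to \emph{mistakes}, not to logistic loss: a mistake at stage $i-1$ forces $[\sigma(s_t^{i-1})]_{y_t}\le 1/2$, hence $w^i\ge M_{i-1}/2\ge \tfrac12\min_i M_i$. Summing the additive decrements and lower-bounding the telescoped sum by $-T\log K$ (using only nonnegativity of $L_t^N$) gives
\[
-T\log K \le -\tfrac14\Bigl(\min_i M_i\Bigr)\sum_i\gamma_i^2 + N\mathcal{R}(T),
\]
which rearranges to the claimed bound on $\min_i M_i$. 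So the argument is additive, anchored on mistake counts rather than on the logistic potential, and the Freedman step sits at the \emph{front} (for the Hedge layer), not at the end. A secondary issue in your sketch: in the first statement the $\gamma_i$ are \emph{empirical} edges defined from the weak learners' realized cost $c_+^i-c_-^i$, with no weak-learning assumption invoked; the assumption only enters in the second statement to lower-bound each $\gamma_i$ by $\gamma - KS/w^i$.
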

\citet{foster2018logistic} provide a solution to the boosting regression problem with regret $\mathcal{R}(T)=\tilde{\mathcal{O}}(1)$ by recasting it as a regular regression problem, mapping $\alpha$ to $W_\alpha\in\reals^{K\times 2K}$ and expanding the dimension to $d=2K$.
Unlike \NAME, the regret bound of multiclass logistic regression provided by \citet{foster2018logistic} scales with the algebraic dimension of the comparator class $\mathcal{W}$, which is only $1$ instead of $2K^2$.
We show in the next paragraph a significantly more efficient reduction based on \NAME.

\paragraph{Solving the boosting regression.}
We present a suitable reduction of the boosting regression to \emph{binary} regression,  which therefore does not scale with the number of classes $K$.
Algorithm~\ref{alg:boosting regression} first collapses the dimension of the prediction from $K$ classes to a binary decision representing a prediction of whether the predicted label $l_t$ is correct. 
Then it clips the magnitude of the logits to $\log(T)$ to ensure bounded $R$ and runs \NAME\footnote{AIOLI would work equally well, since we only require binary regression.} over the resulting binary regression problem.
Finally it expands the binary result to $K$ classes by predicting all classes $y\neq l_t$ proportional to their input logits $s_{t,y}$.
\begin{algorithm}
\textbf{Input:} \NAME with $K=2,d=2, R=1+\ln(T), B=2$.\\
    \For{$t=1, \dots,T$}{
        Receive $s_{t},l_t$\;\\
        $\tilde s_{t}\gets{}(s_{t,l_t},\quad\log\sum_{k\neq l_t}\exp(s_{t,k}))$\;\\
        $\tilde x_t = (\min\{\log(T),\max\{-\log(T),\frac{1}{2}(\tilde s_{t,1}-\tilde s_{t,2}) \},\quad 1)$\,.\\
        $\zeta_t\gets\NAME.\predict(\tilde x_t)$ \\
        $\hat s_{t,l_t} \gets \zeta_{t,1}$\\
        $\forall k\in[K]\setminus\{l_t\}:\,\hat s_{t,k} \gets s_{t,k}+\zeta_{t,2}-\log\sum_{k'\neq l_t}\exp(s_{t,k'})$\,.\\
        Play $s_t$ and receive true class $y_t$.\\
        $\NAME.\update(\tilde x_t, 1+\mathbb{I}\{y_t\neq l_t\})$
    }
    \caption{Boosting regression with \NAME subroutine}  \label{alg:boosting regression}
\end{algorithm}

\noindent The following theorem provides a bound on the regret of Algorithm \ref{alg:boosting regression}. We provide the proof in the Appendix (Section \ref{app:boosting}).
\begin{theorem}
\label{thm: boosting regression}
Algorithm~\ref{alg:boosting regression} solves the boosting regression problem with
$
\mathcal{R}(T) = \mathcal{O}\prn*{\log^2(T)}
$.
\end{theorem}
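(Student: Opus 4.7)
The plan is to reduce the boosting regression problem to a $2$-class, $2$-dimensional binary logistic regression instance, and then invoke Theorem~\ref{thm: main regret}. There are three pieces: (i) showing that the multiclass logistic losses of both the algorithm and any comparator $\alpha\in[-2,2]$ collapse, up to a common offset that cancels in the regret, to a binary logistic loss at specific logit vectors; (ii) exhibiting an admissible $W^*_\alpha\in\W$ for the internal FOLKLORE whose linear predictions match the target binary logits; and (iii) controlling the error from the $\log T$-clipping step.

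For (i), set $a_t = s_{t,l_t}$ and $b_t=\log\sum_{k\neq l_t}e^{s_{t,k}}$, so $\tilde s_t=(a_t,b_t)$. A direct computation using $\hat s_{t,l_t}=\zeta_{t,1}$ and $\hat s_{t,k}=s_{t,k}+\zeta_{t,2}-b_t$ for $k\neq l_t$ gives $\log\sum_k e^{\hat s_{t,k}} = \log(e^{\zeta_{t,1}}+e^{\zeta_{t,2}})$, and similarly $\log\sum_k e^{s_{t,k}+\alpha\mathbb{I}[k=l_t]} = \log(e^{a_t+\alpha}+e^{b_t})$. Splitting on whether $y_t=l_t$ (binary label $y_t^{\mathrm{bin}}=1$) or not ($y_t^{\mathrm{bin}}=2$), one verifies that the per-step regret satisfies
$-\log[\sigma(\hat s_t)]_{y_t} + \log[\sigma(s_t+\alpha\be_{l_t})]_{y_t} = \ell(\zeta_t,y_t^{\mathrm{bin}}) - \ell(\tilde\zeta_t^\alpha,y_t^{\mathrm{bin}}),$
where $\tilde\zeta_t^\alpha = (a_t+\alpha,b_t)$; the offset $b_t-s_{t,y_t}$ appearing in the $y_t\neq l_t$ case shows up on both sides and cancels.

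For (ii), the binary logistic loss depends only on the difference of the two logits, and the target difference is $a_t-b_t+\alpha$. With $\tilde x_t=(c_t,1)$ and $c_t$ the clipped value of $(a_t-b_t)/2$, I take
$W^*_\alpha = \begin{pmatrix} 1 & \alpha/2 \\ -1 & -\alpha/2 \end{pmatrix},$
so $W^*_\alpha\tilde x_t$ has coordinate difference $2c_t+\alpha$. Each row norm equals $\sqrt{1+\alpha^2/4}\le\sqrt{2}\le B=2$, hence $W^*_\alpha\in\W$; also $\|\tilde x_t\|_2\le\sqrt{\log^2 T+1}\le 1+\log T = R$. When clipping is inactive we have $c_t=(a_t-b_t)/2$ and $\ell(W^*_\alpha\tilde x_t,y_t^{\mathrm{bin}}) = \ell(\tilde\zeta_t^\alpha,y_t^{\mathrm{bin}})$ exactly.

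For (iii), when $|(a_t-b_t)/2|>\log T$ the clipped logit difference $2c_t+\alpha$ is less extreme than the target. A case analysis on the sign of $(a_t-b_t)/2$ and on $y_t^{\mathrm{bin}}$ shows that in every regime, the direction of clipping only inflates the already exponentially small side of the softmax (bounded by $\ln(1+e^{-2\log T+2}) = O(T^{-2})$) while on the other side the clipped loss is dominated by the target. Thus the cumulative clipping slack is $O(T\cdot T^{-2})=O(1)$. Plugging $K=2,d=2,B=2,R=1+\ln T$ into Theorem~\ref{thm: main regret} gives internal FOLKLORE regret $O((BR+\ln K)dK\ln T) = O(\log^2 T)$; assembling (i)--(iii) and taking the infimum over $\alpha\in[-2,2]$ (the bound holds uniformly) yields $\mathcal{R}(T)=O(\log^2 T)$. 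I expect the clipping analysis to be the most delicate step, since one must verify carefully that the clipping direction inflates only the negligible side of the binary loss in each sign regime.
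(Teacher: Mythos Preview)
Your proposal is correct and follows essentially the same route as the paper: the same collapse of the multiclass loss to a binary loss via $\tilde s_t=(s_{t,l_t},\log\sum_{k\neq l_t}e^{s_{t,k}})$, the same comparator matrix $W_\alpha=\begin{pmatrix}1&\alpha/2\\-1&-\alpha/2\end{pmatrix}$, the same $B=2$, $R=1+\ln T$ invocation of Theorem~\ref{thm: main regret}, and the same $O(T^{-2})$ per-step clipping slack (the paper obtains the bound $\ln(1+e^{-2\ln T+2})\le e^2/T^2$ via a symmetry-over-labels argument rather than your sign-by-sign case split, but the content is identical). Your offset-cancellation argument in (i) is the explicit version of the paper's observation that $[\sigma(\hat s_t)]_{k\in[K]\setminus\{l_t\}}\propto[\sigma(s_t)]_{k\in[K]\setminus\{l_t\}}$.
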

Combining Theorem~\ref{thm: boosting regression} with Proposition~\ref{prop:adaboost} directly leads to an efficient algorithm for online multiclass boosting. 
The running time complexity during each time-step of algorithm~\ref{alg:boosting regression} is only $\mathcal{O}(K)$. 
Hence, in contrast to the work of \citet{foster2018logistic}, our regression routine does not dominate the overall running time complexity of AdaBoost.OLM++, where computing the cost matrices $C_i^t$ alone is at least linear in $K$.

\subsection{Online-to-Batch Conversion}

The \NAME algorithm can be used to obtain a predictor for the batch statistical setting via the standard online-to-batch conversion. Given a a sample set of size $T$ drawn i.i.d. from the underlying distribution, this predictor runs \NAME on the sample (with an arbitrary order) and stops at a random time $\tau \in \{1, 2, \ldots, T\}$. Then, given a new input feature vector $x \in \X$, the predicted class probabilities are computed by solving the optimization problem \eqref{eq:folkore-opt} in \NAME at time step $\tau$ using the procedure from Theorem~\ref{thm:runtime}. Standard online-to-batch conversion analysis~\citep{Cesa-BianchiCG04} then implies that the expected excess risk of this predictor over the optimal linear prediction $W$ with $\|W\|_{2,\infty} \leq B$ is bounded by the average regret, i.e. by $\mathcal{O}\left(\frac{(BR+\ln(K))dK\ln(T)}{T}\right)$.

% \na{Online to Batch - check MArkov inquality in original proof}
% \cite{foster2018logistic} have proven a general reduction of high-probability online-to-batch-conversion to online logistic regression.
% \begin{theorem}[Theorem 26\cite{foster2018logistic}]
% Let $\mathcal{F}$ be a class of functions $f : \mathcal{X}\rightarrow \Delta(K)$. Suppose there is an online multiclass
% learning algorithm over $\mathcal{F}$ using the log loss that for any data sequence $(x_t, y_t)\in\mathcal{X}\times[K]$ for
% $t = 1, 2, \dots , n$ produces distributions $p_t \in \Delta(K)$ such that the following regret bound holds:
% \begin{align*}
% \sum_{t=1}^n \ell(p_t
% , y_t) - \inf_{f\in\mathcal{F}}\sum_{t=1}^n
% \ell(f(x_t), y_t) \leq \mathcal{R}(n)\,.
% \end{align*}
% Here $\mathcal{R}(n)$ is some function of $n$ and other relevant problem dependent parameters. Then for any
% given $\delta > 0$ and any (unknown) distribution $D$ over $X \times [K]$, it is possible to construct a predictor
% $g : \mathcal{X} \rightarrow \Delta(K)$ using $n$ samples $\{(x_t, y_t)\}_{t=1}^n$
% drawn from $D$ such that with probability at least $1 - \delta$,
% the excess risk of $g$ is bounded as
% \begin{align*}
%     \mathbb{E}_{(x,y)\sim D}\left[\ell(g(x),y)\right] \leq \inf_{f\in\mathcal{F}}\mathbb{E}_{(x,y)\sim D}\left[\ell(f,y)\right]+\mathcal{O}\left(\frac{\log(\frac{1}{\delta}\mathcal{R}(\frac{n}{\log(1/\delta)})+\log(Kn)\log(\frac{n}{\delta}}{n}\right)\,.
% \end{align*}
% \end{theorem}

\section{Conclusion and Future work}

In this paper we gave an efficient online multiclass regression algorithm which enjoys logarithmic regret with running time scaling only quadratically in the dimension of the feature vectors, thus answering an open question of \citet{foster2018logistic}. We also showed how to apply this algorithm to the online bandit multiclass prediction and online multiclass boosting problems via new reductions.

One open question remaining in this line of work is the following. The regret bound in \citep{foster2018logistic} scales with the algebraic dimension of $\mathcal{W}$, while ours scales with the worst-case dimension $Kd$. Our main technique, specifically Lemma~\ref{lem:linearization} breaks when one tries to reduce the dependency to the algebraic dimension. Designing an algorithm with similar performance guarantees to ours but with regret scaling with the algebraic dimension of $\mathcal{W}$ would be quite interesting.

% \acks{We thank a bunch of people.}

\bibliography{main}

\appendix

\section{Preliminaries}

We begin by providing some simple calculations regarding the gradient and Hessian of the multinomial logistic loss functions. For any $x,y$ we have that 
\begin{align*}
&\nabla_W \ell(Wx,y) = (\sigma(Wx) - y)\otimes x\\
&\nabla^2_W \ell(Wx_t,y) = (\diag(\sigma(Wx))-\sigma(Wx)\sigma(Wx)^\top)\otimes (xx^\top)\,.
\end{align*}
Note that the Hessian is independent of $y$. This fact will be useful in the analysis. The following lemma provides a simple bound on the Hessian. 

\begin{lemma}
\label{lem: hessian upper bound}
For any $x,y$, we have that the hessian satisfies
\[\nabla_W^2 \ell(Wx,y)\preceq \|x\|_2^2\cdot\bI_{Kd}\,.\]
\end{lemma}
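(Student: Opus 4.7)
The plan is to start from the closed form of the Hessian recalled in the preliminaries, namely
\[
\nabla^2_W \ell(Wx,y) \;=\; \bigl(\diag(\sigma(Wx)) - \sigma(Wx)\sigma(Wx)^\top\bigr) \otimes (xx^\top),
\]
and then bound each factor of this Kronecker product separately before using a standard monotonicity property of $\otimes$ on PSD matrices.

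First I would handle the ``logit factor.'' Writing $p = \sigma(Wx) \in \Delta_K$, the matrix $M(p) \defeq \diag(p) - pp^\top$ is the covariance of a categorical random variable, and in particular is PSD. Its diagonal entries are $p_k(1-p_k) \le p_k \le 1$, and dropping the (also PSD) term $pp^\top$ gives the crude but sufficient bound $M(p) \preceq \diag(p) \preceq \bI_K$. (Equivalently, $\mathrm{tr}(M(p)) = 1 - \|p\|_2^2 \le 1$ bounds the top eigenvalue of this PSD matrix by $1$.) Next I would handle the ``input factor'': $xx^\top \preceq \|x\|_2^2\, \bI_d$, since $\|x\|_2^2 \bI_d - xx^\top$ is PSD (it annihilates $x$ and acts as $\|x\|_2^2$ on $x^\perp$).

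Then I would combine these via the fact that if $0 \preceq A \preceq A'$ and $0 \preceq B \preceq B'$ then $A \otimes B \preceq A' \otimes B'$, which follows from the identity
\[
A' \otimes B' - A \otimes B \;=\; (A'-A) \otimes B' + A \otimes (B'-B),
\]
both summands being Kronecker products of PSD matrices and hence PSD. Applying this with $A = M(p)$, $A' = \bI_K$, $B = xx^\top$, $B' = \|x\|_2^2 \bI_d$ yields
\[
\nabla^2_W \ell(Wx,y) \;\preceq\; \bI_K \otimes (\|x\|_2^2 \bI_d) \;=\; \|x\|_2^2 \, \bI_{Kd},
\]
which is exactly the claimed bound.

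There is no real obstacle here; the only thing to be a little careful about is choosing the Kronecker bound in a form that matches the canonical vectorization convention used in the paper (so that $\bI_K \otimes \bI_d$ is literally $\bI_{Kd}$), and making sure both bounds $M(p) \preceq \bI_K$ and $xx^\top \preceq \|x\|_2^2 \bI_d$ are applied to PSD matrices before invoking the Kronecker monotonicity identity. Everything else is routine.
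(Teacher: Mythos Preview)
Your proposal is correct and follows essentially the same route as the paper: the paper also writes the Hessian as $(\diag(\sigma)-\sigma\sigma^\top)\otimes(xx^\top)$, bounds each tensor factor by $\bI_K$ and $\|x\|_2^2\bI_d$ respectively, and then combines. You are simply more explicit than the paper about why the Kronecker product of the two bounds is valid (the monotonicity identity for PSD matrices), which the paper leaves implicit.
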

\begin{proof}
We have
\begin{align*}
    \nabla_W^2 \ell(Wx,y) = (\diag(\sigma(Wx))-\sigma(Wx)\sigma(Wx)^\top)\otimes (xx^\top)\,.
\end{align*}
We have that $xx^\top \preceq \|x\|_2^2 \identity_{d}$. Further, $\sigma(Wx)$ is a probability distribution and hence
$(\diag(\sigma(Wx))-\sigma(Wx)\sigma(Wx)^\top)\preceq \identity_K$. 
Combining these two inequalities completes the proof.
\end{proof}
\section{Omitted proofs}
\label{sec:main_proofs}
\begin{proof}\textbf{of Lemma~\ref{lem:skewed ftrl regret}}
For any $W$, define $\hat{L}_t(W) \defeq \sum_{s=1}^{t} \hat{\ell}_s(W) + \lambda \|W\|_F^2$ with $L_0(W) \defeq \lambda\|W\|_F^2 $. Further define $\hat{W}_t = \argmin_{W \in \reals^{k \times d}} \hat{L}_t(W)$. We now have the following consequence for any $W$, 
\begin{align*}
    \regret(W) &= \sum_{t=1}^T\left[ \ell_t(W_t)-\ell_t(W)\right]\leq \sum_{t=1}^T\left[ \hat\ell_t(W_t)-\hat{\ell}_t(W)\right]\\
    &=\sum_{t=1}^T\left[ \hat\ell_t(W_t)+\hat L_{t-1}(\hat W_{t-1})-\hat L_t(\hat W_t)\right] +\underbrace{\hat L_T(\hat W_T)-\hat L_T(W)}_{\leq 0 \;\;(\text{since $\hat W_T$ minimizes $\hat L_T$})}-\lambda(\norm{\hat W_1}_F^2-\norm{W}_F^2)\\
    &\leq \lambda\norm{W}_F^2 +\sum_{t=1}^T\left[ \hat L_t(W_t)-\hat L_t(\hat W_t)+\hat L_{t-1}(\hat W_{t-1})-\hat L_{t-1}(W_t)\right]\,.
\end{align*}
The first inequality above follows from the fact that $\hat \ell_t(W_t) = \ell_t(W_t)$ and for all $W$, $\hat \ell_t(W_t) \leq \ell_t(W_t)$. The last inequality follows from the definition of $\hat L_t$.

Note that $\hat L_t$ is a quadratic function with hessian $2A_t$.
For any quadratic function $f$ with invertible hessian $A$, it holds that
\begin{align*}
    f(x)-\min_{x'}f(x') = \frac{1}{2}\norm{\nabla f(x)}^2_{A^{-1}}\,.
\end{align*}
Hence 
\begin{align*}
    \hat L_t(W_t)-\hat L_t(\hat W_t)+\hat L_{t-1}(\hat W_{t-1})-\hat L_{t-1}(W_t) = \frac{1}{4}\left(\norm{\nabla\hat L_t(W_t)}^2_{A^{-1}_t}-\norm{\nabla\hat L_{t-1}(W_t)}^2_{A^{-1}_{t-1}}\right)\,.
\end{align*}
By definition of our strategy of selecting $W_t$ in Algorithm \ref{alg: ftrl}, (since the optimization if unconstrained and is for a function which is strongly convex and continuously differentiable), we have that
\begin{align*}
    \nabla \hat L_{t-1}(W_t)+\nabla \phi_t(W_t) = 0\,,
\end{align*}
which directly gives $\nabla \hat L_t(W_t) = \nabla\ell_t(W_t)-\nabla \phi_t(W_t)$ and $\nabla \hat L_{t-1}(W_t) = -\nabla \phi_t(W_t)$.
Plugging these into the equation above and combining everything completes the proof.
\end{proof}

\begin{proof}\textbf{of Lemma~\ref{lem: lower quadratic}} Given $W_t$ and a target $W$ with the guarantee that $\|Wx_t\|_{\infty} \leq BR$, define the following vectors  
% Let $\omega_i = \frac{\exp(\ip{W_i,x_t})}{\sum_{j=1}^K\exp(\ip{W_j,x_t})},\nu_i = \frac{\exp(\ip{W_{t,i},x_t})}{\sum_{j=1}^K\exp(\ip{W_{t,j},x_t})}$ 
\[ \omega = \sigma(Wx_t) \text{ and } \nu = \sigma(W_tx_t).\]
Further denote $\ln(w)=[\ln(w_i)]_{i=1}^K$ as the element-wise logarithm of a vector $w$ and define $\gamma \defeq \frac{1}{RB+\ln(K)/2}$. The lower bound is defined as 
\[\hat\ell_t(W) = \ell_t(W_t) + \ip{\overrightarrow{W}-\overrightarrow{W}_t,\nabla \ell_t(W_t)}+\gamma \norm{\overrightarrow{W}-\overrightarrow{W}}^2_{\nabla^2\ell_t(W_t)},\]
which implies the following,
\begin{align*}
    \hat\ell_t(W) - \ell_t(W) &= -\ip{\ln(\nu),y_t}+\ip{\ln(\omega),y_t}+\ip{(W-W_t)x_t, \nu-y_t} + \gamma\norm{(W-W_t)x_t}^2_{\diag(\nu)-\nu\nu^\top}\\
    &= -\ip{\ln(\nu),y_t}+\ip{\ln(\omega),y_t}+\ip{\ln(\omega) - \ln(\nu), \nu-y_t} + \gamma\norm{\ln(\omega) - \ln(\nu)}^2_{\diag(\nu)-\nu\nu^\top}\\
 &=    \ip{\ln(\omega)-\ln(\nu),\nu} +\gamma\norm{\ln(\omega)-\ln(\nu)}^2_{\diag(\nu)-\nu\nu^\top} \,,
\end{align*}
The second equality above follows using the facts that $\langle \nu - y_t, \mathbf{1}_K\rangle = 0$ and $(\diag(\nu) - \nu\nu^\top)\mathbf{1}_K = 0$.
Denote $\Omega=\{\sigma(Wx)\,\vert\,W\in\mathcal{W},x\in\mathcal{X}\}\subset\Delta_K$ be the set of all possible distributions induced by $W\in\mathcal{W}$, then we have
\begin{align*}
    \hat\ell_t(W)-\ell_t(W) \leq \max_{\omega\in\Omega}\max_{\nu\in\Delta_K}\underbrace{\ip{\ln(\omega)-\ln(\nu),\nu} +\gamma\norm{\ln(\omega)-\ln(\nu)}^2_{\diag(\nu)-\nu\nu^\top}}_{\defeq F(\nu, \omega)}\,.
\end{align*}
To analyse the above we first fix a $\omega \in \Omega$ and analyse the expression. Let $\nu^*$ be the optimum fixing $\omega$. Note that since for all $W \in \W, X \in \X$ we have $\|Wx\|_{\infty} \leq BR$, we conclude that for any $i \in [K]$, we have $\omega_i \in (0,1)$. Further, by the KKT conditions, at the optimal point $\nu^* \in \Delta_K$, there must exist $\lambda\in\reals$ such that
for all $i\in[K]$, one of the following must hold,
\begin{align*}
    &\frac{\partial}{\partial\nu_i}F(\nu^*, \omega) = \lambda\\
    \text{or } &\frac{\partial}{\partial\nu_i}F(\nu^*, \omega) < \lambda \text{ and }\nu^*_i=0\\
    \text{or } &\frac{\partial}{\partial\nu_i}F(\nu^*, \omega) > \lambda \text{ and }\nu^*_i=1\,.
\end{align*}
We can derive a closed form expression for the derivative given by
\begin{align*}
    \frac{\partial}{\partial\nu_i}F(\nu^*,\omega) = \left(\ln\left(\frac{\omega_i}{\nu^*_i}\right)-1\right)\left(1-2\gamma\ip{\ln\left(\omega\right)-\ln\left(\nu^*\right),\nu^*}\right)+\gamma\ln^2\left(\frac{\omega_i}{\nu^*_i}\right)+2\gamma\ln\left(\frac{\omega_i}{\nu^*_i}\right)
\end{align*}
We observe that for any $i \in [k]$,  \[\lim_{\nu^*_i\rightarrow 0}\frac{\partial}{\partial\nu_i}F(\nu^*, \omega)=+\infty.\]
Therefore by the KKT conditions, there cannot by an $i \in [K]$, such $\nu^*_i = 0$. Since $\nu^* \in \Delta_K$, this also implies that there cannot be an $i \in [K]$, such $\nu^*_i = 1$. Therefore by the KKT conditions we must have that there exists a $\lambda \in \reals$ such that for all $i$, 
\[\frac{\partial}{\partial\nu_i}F(\nu^*, \omega) = \lambda\]
The above condition can be re-written as for all $i \in [K]$,
\begin{align*}
     \left(\ln\left(\frac{\omega_i}{\nu^*_i}\right)-1\right)\left(1-2\gamma \mathrm{KL}(\omega, \nu^*)\right)+\gamma\ln^2\left(\frac{\omega_i}{\nu^*_i}\right)+2\gamma\ln\left(\frac{\omega_i}{\nu^*_i}\right) = \lambda 
\end{align*}
Further observe that the LHS above across all $i$, is a fixed quadratic function in $\ln(\frac{w_i}{\nu^*_i})$ (Note that the $\mathrm{KL}$ term is fixed across constants). Therefore to satisfy the above equation, it must be the case that across all $i$, $\ln(\frac{w_i}{\nu^*_i})$ can acquire at most two distinct values. Formally, there must exist two values $\alpha_1, \alpha_2 \in \reals$ and a subset $J \subseteq [K]$ such that the following holds,
\begin{align*}
    \forall i\in J:\, &\frac{w_i}{\nu^*_i} = \alpha_1\\
\forall i\in [K]\setminus J:\, &\frac{w_i}{\nu^*_i} = \alpha_2.
\end{align*}
Using the above define the following quantities, let \[w\defeq\sum_{i\in J}\omega_i, \quad v\defeq\sum_{i\in J}\nu^*_i,\] then we have that
\begin{multline*}
    F(\nu^*,\omega) = f(v,w) \defeq  v\ln\left(\frac{w}{v}\right)+(1-v)\ln\left(\frac{1-w}{1-v}\right) - \\ \gamma\left(\left(v\ln\left(\frac{w}{v}\right)+(1-v)\ln\left(\frac{1-w}{1-v}\right)\right)^2+v\ln^2\left(\frac{w}{v}\right)+(1-v)\ln^2\left(\frac{1-w}{1-v}\right)\right).
\end{multline*}

At this point we have effectively reduced the computation to the one-dimensional case. Note that $v \in [0,1]$. Further since $\omega \in \Omega$, we have that 
\[\ln(w)\geq \ln\left(\frac{\exp(-BR)}{(K-1)\exp(BR)+\exp(-BR)}\right) > \ln\left(\frac{\exp(-2BR)}{K}\right) = -2BR-\ln(K).\]
Therefore we can reduce the problem as follows
\begin{equation}
    \label{eqn:temp2}
    \max_{\omega \in \Omega} \max_{\nu \in \Delta_K} F(\nu, \omega) \leq \sup_{w \in ( \frac{e^{-2BR}}{K},1)} \sup_{v \in [0,1]} f(v, w)
\end{equation}
Next we show that 
\begin{equation}
\label{eqn:temp}
    \sup_{w \in ( \frac{e^{-2BR}}{K},1)} \sup_{v \in [0,1]} f(v, w) \leq 0
\end{equation}
 which completes the proof.
To characterize the sup of $f(v,w)$ over $v$ for a given $w$, we consider two cases.

\noindent \textbf{Case 1:} Firstly, let $v\in\{0,1\}$. In this case, $f(v,w)$ is $\ln(w)\leq0$ and $\ln(1-w)\leq0$ respectively. 

\noindent \textbf{Case 2:} The next case is when $\frac{\partial}{\partial v}f(v,w) = 0$ for some $v \in (0,1)$. The derivative is given by
\begin{align*}
    \frac{\partial}{\partial v}f(v,w) = \left(\ln(\frac{w}{1-w})-\ln(\frac{v}{1-v})\right)\left(1+\gamma(2-(v-\frac{1}{2})(\ln(\frac{w}{1-w})- \ln(\frac{v}{1-v})) \right)\,.
\end{align*}
We will show that $\frac{\partial}{\partial v}f(v,w) = 0$ if and only if $v=w$. The forward direction is immediate. To see the backward direction, firstly, observe that for all $v \in (0,1)$ \[(v-\frac{1}{2})\ln(\frac{v}{1-v}) \geq 0,.\]
Therefore we have that for any $w \in ( \frac{e^{-2BR}}{K},1)$ and any $v \in (0,1)$,
\begin{align*}
    \left(1+\gamma(2-(v-\frac{1}{2})(\ln(\frac{w}{1-w})- \ln(\frac{v}{1-v})) \right)&> 1-\gamma|(v-\frac{1}{2})\ln(\frac{w}{1-w})|\\ & > 1-\frac{1}{2RB+\ln(K)}\left\lvert\ln(\frac{w}{1-w})\right\rvert\\
    &>1+\frac{\min\{\ln(w),\ln(1-w)\}}{2RB+\ln(K)}\geq 0. 
\end{align*}
Therefore we have that 
\begin{align*}
    \frac{\partial}{\partial v}f(v,w) = 0 \Longleftrightarrow  \left(\ln(\frac{w}{1-w})-\ln(\frac{v}{1-v})\right) = 0,
\end{align*}
which happens iff and only if $v=w$. It can now be seen that when $v=w$, $f(v,w) = 0$, which establishes \eqref{eqn:temp}. Using \eqref{eqn:temp2}, we have also established 
\begin{align*}
    \max_{\omega\in\Omega}\max_{\nu\in\Delta_K}F(\nu, \omega)\leq 0.
\end{align*}
This completes the proof.
\end{proof}

\begin{proof}\textbf{of Lemma~\ref{lem:linearization}} To prove Lemma \ref{lem:linearization}, as alluded to before, we first prove a restriction of the lemma for the case when $\ell_t \in \{\ell(Wx_t,\be_k)\,\vert\,k\in [K]\}$.

\begin{lemma}
\label{lem:linearization2}
For any $W$, any p.d. matrix $A \succ 0$, any regularization function $\phi_t$, and any $\ell_t(W) \in \{\ell(Wx_t,y)\,\vert\,y\in \Delta_K\}$, the instantaneous regret decomposes as
\begin{align*}
    \norm{\nabla \phi_t(W)-\nabla \ell_t(W)}^2_{A^{-1}}-\norm{\nabla &\phi_t(W)}^2_{A^{-1}} \\
    &= -2\ip{A^{-1} \nabla\ell_t(W), \nabla \phi_t(W) - b_t(W,A)} + \operatorname{Tr}(A^{-1} \nabla^2\ell_t(W))\,,
\end{align*}
where 
\[
b_t(W;A) =  \sigma(Wx_t)\otimes x_t - \frac{1}{2}A\diag_\otimes(A^{-1})(\mathbf{1}_K\otimes x_t) \,,
\]
and $\diag_\otimes$ denotes the operator that sets all matrix entries besides those corresponding to the $K$-many $d\times d$ blocks on the diagonal to $0$.
\end{lemma}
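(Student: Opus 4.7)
The plan is first to expand the squared $A^{-1}$-norms on the left-hand side, yielding
\[
\|\nabla\phi_t(W)-\nabla\ell_t(W)\|^2_{A^{-1}}-\|\nabla\phi_t(W)\|^2_{A^{-1}} = -2\langle \nabla\phi_t(W), A^{-1}\nabla\ell_t(W)\rangle + \|\nabla\ell_t(W)\|^2_{A^{-1}}.
\]
Since the claimed right-hand side already contains $-2\langle A^{-1}\nabla\ell_t(W), \nabla\phi_t(W)\rangle$, both $\phi_t$-dependent terms cancel, and the lemma reduces to the purely loss-dependent identity
\[
\|\nabla\ell_t(W)\|^2_{A^{-1}} \;=\; 2\langle A^{-1}\nabla\ell_t(W), b_t(W,A)\rangle + \operatorname{Tr}(A^{-1}\nabla^2\ell_t(W)).
\]
This reformulation is what makes an equality (rather than an inequality) possible in the restricted setting: $b_t$ is engineered precisely so that this identity holds for each base loss $\ell(Wx_t,\be_k)$, \emph{independent} of the choice of $\phi_t$.

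\textbf{Block-form preliminaries.} I would substitute the closed-form expressions from the Preliminaries, namely $\nabla\ell_t(W)=(p-\be_k)\otimes x_t$ and $\nabla^2\ell_t(W)=(\diag(p)-pp^\top)\otimes x_tx_t^\top$, where $p\defeq\sigma(Wx_t)$. Writing $M_{ij}$ for the $(i,j)$-th $d\times d$ block of $A^{-1}$, the entire computation reduces to two elementary Kronecker-product identities: (i) $(u\otimes x_t)^\top A^{-1}(v\otimes x_t)=\sum_{i,j}u_iv_j\,x_t^\top M_{ij}x_t$ for any $u,v\in\reals^K$; and (ii) for any symmetric $A'\in\reals^{K\times K}$, $\operatorname{Tr}(A^{-1}(A'\otimes x_tx_t^\top))=\sum_{i,j}A'_{ij}\,x_t^\top M_{ij}x_t$. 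Identity (ii) immediately yields
\[
\operatorname{Tr}(A^{-1}\nabla^2\ell_t(W))=\sum_i p_i\,x_t^\top M_{ii}x_t-\sum_{i,j}p_ip_j\,x_t^\top M_{ij}x_t.
\]

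\textbf{Executing the cancellation.} Using (i) to expand $\|\nabla\ell_t(W)\|^2_{A^{-1}}$ gives $\sum_{i,j}p_ip_j\,x_t^\top M_{ij}x_t + x_t^\top M_{kk}x_t - 2\sum_i p_i\,x_t^\top M_{ik}x_t$, where the last term arises by combining the two off-diagonal cross sums via the symmetry $M_{ij}=M_{ji}^\top$ and the scalarity of $x_t^\top(\cdot) x_t$. Separately I would evaluate $2\langle A^{-1}\nabla\ell_t(W),b_t(W,A)\rangle$ by splitting $b_t$ into its two halves: the $p\otimes x_t$ half contributes $2\sum_{i,j}p_ip_j\,x_t^\top M_{ij}x_t - 2\sum_j p_j\,x_t^\top M_{kj}x_t$ by identity (i), while the second half is designed so that left-multiplication by $A^{-1}$ cancels the outer $A$, leaving $\diag_\otimes(A^{-1})(\mathbf{1}_K\otimes x_t)$; its inner product with $\nabla\ell_t(W)$ equals $\sum_ip_i\,x_t^\top M_{ii}x_t - x_t^\top M_{kk}x_t$. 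Subtracting, the two $x_t^\top M_{kk}x_t$ contributions cancel, and the term $-2\sum_i p_i\,x_t^\top M_{ik}x_t$ cancels against $+2\sum_j p_j\,x_t^\top M_{kj}x_t$ (by the same symmetry), leaving exactly $\sum_i p_i\,x_t^\top M_{ii}x_t-\sum_{i,j}p_ip_j\,x_t^\top M_{ij}x_t$, which matches the expression for $\operatorname{Tr}(A^{-1}\nabla^2\ell_t(W))$.

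\textbf{Main obstacle.} The hard part is disciplined bookkeeping of the Kronecker and block-matrix expressions. The cancellation of the $k$-dependent quantities $x_t^\top M_{kk}x_t$ and of the off-diagonal cross sums $\sum_i p_i\,x_t^\top M_{ik}x_t$ is what forces the specific coefficient $\tfrac{1}{2}$ and the $A\,\diag_\otimes(A^{-1})$ structure in the definition of $b_t$; once the two block-form identities above are established, the rest of the derivation is forced.
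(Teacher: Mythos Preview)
Your proposal is correct and follows essentially the same route as the paper: both begin with the expansion $\|\nabla\phi_t-\nabla\ell_t\|_{A^{-1}}^2-\|\nabla\phi_t\|_{A^{-1}}^2=\|\nabla\ell_t\|_{A^{-1}}^2-2\langle A^{-1}\nabla\ell_t,\nabla\phi_t\rangle$ and then reduce everything to the identity $\|\nabla\ell_t\|_{A^{-1}}^2=2\langle A^{-1}\nabla\ell_t,b_t\rangle+\operatorname{Tr}(A^{-1}\nabla^2\ell_t)$ for the base losses $\ell(\cdot\,x_t,\be_k)$. The only difference is presentational: the paper manipulates the quantity $\|(\sigma_t-\be_y)\otimes x_t\|_{A^{-1}}^2$ directly via trace/Kronecker identities (e.g.\ rewriting $\operatorname{tr}((\diag(\be_y-\sigma_t)\otimes x_tx_t^\top)A^{-1})$ as an inner product with $\diag_\otimes(A^{-1})(\mathbf{1}_K\otimes x_t)$), whereas you unfold everything into the block entries $M_{ij}$ of $A^{-1}$ and verify the cancellations coordinate-wise; the underlying algebra is identical.
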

To use the above lemma first note that for any $W$ 
\[\nabla \ell_t(W) = \sum_{k \in [K]} y_k \nabla \ell(Wx_t, \be_k).\]
Therefore an application of Jensen's inequality implies that 

\begin{align*}
    \norm{\nabla \phi_t(W)-\nabla \ell_t(W)}^2_{A^{-1}}-\norm{\nabla &\phi_t(W)}^2_{A^{-1}} \\
    &\leq \left(\sum_{k=1}^{K} \left([y_t]_k \cdot \norm{\nabla \phi_t(W)-\nabla \ell(Wx_t, \be_k)}^2_{A^{-1}} \right) \right)-\norm{\nabla\phi_t(W)}^2_{A^{-1}} \\
    &= \sum_{k=1}^{K} [y_t]_k \cdot \left(\norm{\nabla \phi_t(W)-\nabla \ell(Wx_t, \be_k)}^2_{A^{-1}}  -\norm{\nabla\phi_t(W)}^2_{A^{-1}}\right).\\
\end{align*}
Further using Lemma \ref{lem:linearization2} we get that
\begin{align*}
    \norm{\nabla \phi_t(W)-\nabla \ell_t(W)}^2_{A^{-1}}&-\norm{\nabla \phi_t(W)}^2_{A^{-1}} \\
    &\leq  \sum_{k=1}^K -2\ip{A^{-1} \nabla\ell(Wx_t, \be_k), \nabla \phi_t(W) - b_t(W,A)} + \operatorname{Tr}(A^{-1} \nabla^2\ell_t(W))
    \\ 
    &\leq -2\ip{A^{-1} \nabla\ell_t(W), \nabla \phi_t(W) - b_t(W,A)} + \operatorname{Tr}(A^{-1} \nabla^2\ell_t(W))\,.
\end{align*}
Therefore all that is left to prove is Lemma \ref{lem:linearization2} which we prove next
\begin{proof}\textbf{of Lemma \ref{lem:linearization2}}
Recall that for $\sigma_t = \sigma(Wx_t)$ and for a label $y$, the gradient and Hessian of the logistic loss are respectively
\begin{align*}
&\nabla \ell(Wx_t,y) = (\sigma_t - \be_y)\otimes x_t\\
&\nabla^2 \ell(Wx_t,y) = (\diag(\sigma_t)-\sigma_t\sigma_t^\top)\otimes (x_tx_t^\top)\,.
\end{align*}
Therefore we begin by analyzing the quadratic part of the term which is given by 

\begin{align*}
    \|\nabla \ell_t(W)\|_{A^{-1}}^2 &=\norm{(\sigma_t - \be_y)\otimes x_t}^2_{A^{-1}}\\
    &=\tr((\diag(\be_y)\otimes x_tx_t^\top)A^{-1})+\ip{A^{-1}(\sigma_t - 2\be_y)\otimes x_t,\sigma_t\otimes x_t}\\
    &=\tr((\diag(\be_y-\sigma_t)\otimes x_tx_t^\top)A^{-1})+2\ip{{A^{-1}}((\sigma_t - \be_y)\otimes x_t),\sigma_t\otimes x_t} + \\
    & \qquad \qquad \qquad \qquad \qquad \qquad \qquad \underbrace{\tr((( \diag(\sigma_t) - \sigma_t\sigma_t^{\top}) \otimes x_tx_t^{\top})A^{-1})}_{= \tr(\nabla^2 \ell_t(W)A^{-1})}.
\end{align*}
Finally, note that
\begin{align*}
    \tr((\diag(\be_y-\sigma_t)\otimes x_tx_t^\top)A^{-1}) &= \tr((\diag(\be_y-\sigma_t)\otimes x_tx_t^\top)\diag_{\otimes}(A^{-1}))\\
    &= \tr((\mathbf{1}_K\otimes x_t)((\be_y-\sigma_t)\otimes x_t)^\top\diag_{\otimes}(A^{-1}))\\
    &= \ip{(\be_y-\sigma_t)\otimes x_t),\diag_{\otimes}(A^{-1})(\mathbf{1}_K\otimes x_t)}\,.
\end{align*}
Combining everything and replacing $(\sigma_t - \be_y)\otimes x_t=\nabla \ell_t(W)$ we get that,
\[\|\nabla \ell_t(W)\|_{A^{-1}}^2 = \ip{A^{-1}\nabla \ell_t(W),-A\diag_{\otimes}(A^{-1})(\mathbf{1}_K\otimes x_t) + 2 \sigma_t \otimes x_t} + \tr(\nabla^2 \ell_t(W)A^{-1})\]
Now noting the following expansion
\begin{align*}
    \norm{\nabla \phi_t(W)-\nabla\ell_t(W)}^2_{A^{-1}}-
    \norm{\nabla \phi_t(W)}^2_{A^{-1}}
    = \|\nabla \ell_t(W)\|_{A^{-1}}^2 - 2 \ip{A^{-1}\nabla \ell_t(W),\nabla \phi_t(W)} \\
\end{align*}
and replacing the above finishes the proof. 

\end{proof}
\end{proof}

\begin{proof}\textbf{of Lemma~\ref{lem: optimal skew}}
Recall that $\nabla\ell(Wx_t, \be_k)=(\sigma_t - \be_k)\otimes x_t$.
Hence
\begin{multline*}
    \nabla \phi_t(W) = (\sigma(Wx_t) - \frac{1}{K}\mathbf{1}_K)\otimes x_t+B_t
    = \sigma(Wx_t)\otimes x_t - \frac{1}{2}A_{t-1}\diag_\otimes(A_{t-1}^{-1})(\mathbf{1}_K\otimes x_t)\\
    =b_t(W;A_{t-1}).
\end{multline*}
\end{proof}

\begin{proof}\textbf{of Theorem~\ref{thm: bandit}}
At any time $t$, with probability $\gamma$  we sample a class uniformly at random.  Otherwise we sample a class from $\sigma_t$ of our algorithm. We update our algorithm only if the class was sampled by the extra exploration and if we selected the correct class.
We have
\begin{align*}
    \mathbb{E}\left[\sum_{t=1}^T\mathbb{I}\left(\hat y_t\neq y_t\right)\right]&
    \leq\mathbb{E}\left[\sum_{t=1}^T(1-\sigma_{t,y_t})\right]+\gamma T\\
    &\leq\mathbb{E}\left[\sum_{t=1}^T-\ln(\sigma_{t,y_t})\right]+\gamma T\\
    &\leq\frac{K}{\gamma}\mathbb{E}\left[\sum_{t=1}^T-\ln(\sigma_{t,y_t})\mathbb{I}\{\hat y_t=y_t\land \text{ explore }\}\right]+\gamma T\\
    &\leq\frac{K}{\gamma}\mathbb{E}\left[\sum_{t=1}^T-\ln(\sigma_{t}(W)_{y_t})\mathbb{I}\{\hat y_t=y_t\land \text{ explore }\}+\operatorname{Reg}_T\right]+\gamma T\\
    &=-\sum_{t=1}^T\ln(\sigma_{t}(W)_{y_t})+\frac{K}{\gamma}\operatorname{Reg}_T+\gamma T
\end{align*}
Plugging in the optimal choice of $\gamma$ completes the proof.
\end{proof}

\section{Online Multiclass Boosting}
\label{app:boosting}
AdaBoost.OLM++ (Algorithm~\ref{alg:boosting_multiclass}) takes $N$ weak learners $(\wl^{i})_{i=1}^N$, which are stateful objects that support the operations $\predict(x,C)$ and $\update(x,C,y)$ based on the weak learner protocol described in section~\ref{sec:boosting}. We use the index $t$ to denote the number of updates of any stateful object. Additionally, we have a copy of an algorithm solving the boosting regression problem $\boostreg$ for each weak learner with operations $\predict(s,l)$ and $\update(s,l,y)$. 
The algorithm maintains $s_t^{i}$, which are the weighted aggregated scores of the first $i$ weak learners.
They are literately updated from $s_t^{i-1}$ given the prediction $l_t^i$ such that the regret $\sum_{t=1}^T\left(\log([\sigma(s_t^i)]_{y_t})-\log([\sigma(s_t^{i-1}+\alpha\be_{l_t^i})]_{y_t})\right)$ is small with respect to some $\alpha\in[-2,2]$.
The predictions $s_t^i$ induce a label $\hat y^i_t=\argmax_{k\in[K]}(s_t^i)_k$, which are treated as expert recommendations. Over these expert predictions, AdaBoost.OLM++ runs the hedge algorithm \citep{freund1997decision} to make its final decision.
Finally, the cost matrices $C_t^i$ are computed in the following way.
Let 
\begin{align*}
    \widehat{C}_t^i(k,y)\defeq \frac{\partial}{\partial z_k}\log([\sigma(z)]_y)|_{z=s_t^{i-1}} = \sigma(s_t^{i-1})_k-\mathbb{I}\{k=y\}\,,
\end{align*}
then $C_t^i$ is the translated and rescaled transformation of $\widehat{C}_{t}^i$ that lies in $\mathcal{C}$:
\begin{align}
    C_{t}^i(k,y)= \frac{1}{K}\left(\widehat{C}_{t}^i(k,y)-\widehat{C}_{t}^i(y,y)\right)\,.\label{eq:boosting_cost_matrix}
\end{align}
\begin{algorithm}[ht!]
\caption{AdaBoost.OLM++ \citep{foster2018logistic}}
\label{alg:boosting_multiclass}
\textbf{Input:} weak learners $\wl^{1}, \ldots, \wl^{N}$, boosting regression algorithm $\boostreg$ \;\\
For all $i\in[N]$, set $v_{1}^{i}\gets{} 1$, initialize weak learner $\wl_{1}^{i}$, and initialize copy of boosting regression algorithm $\boostreg_{1}^{i}$.\\
\For{$t=1,\ldots,n$}{
Receive instance $x_{t}$.\\
$s_{t}^{0} \gets{} 0\in\reals^{K}$.\\
\For{$i=1,\ldots,N$}{
Compute cost matrix $C_{t}^{i}$ from $s_{t}^{i-1}$ using \pref{eq:boosting_cost_matrix}.\\
$l_{t}^{i}\gets{}\wl_{t}^{i}.\predict(x_{t}, C_{t}^{i})$.\\
$s_{t}^{i}\gets{}\boostreg_{t}^{i}.\predict(s_{t}^{i-1},l_t^i)$.\\
$\hat{y}_{t}^{i}\gets{}\argmax_{k}s_{t}^{i}(k)$.\\
}
Sample $i_{t}$ with $\Pr(i_{t}=i)\propto{}v_{t}^{i}$. \label{line:sample}\\
Predict $\hat{y}_{t}=\hat{y}_{t}^{i_t}$ and receive true class $y_{t}\in[K]$.
\For{$i=1,\ldots,N$}{
$\wl_{t+1}^{i}\gets{}\wl_{t}^{i}.\update(x_t, C_{t}^{i}, y_t)$.\\
$\boostreg_{t+1}^{i}\gets{}\boostreg_{t}^{i}.\update(s_t^{i-1},l_t^i, y_t)$.\\
$v_{t+1}^{i}\gets{}v_{t}^{i}\cdot\exp(-\mathbb{I}\{\hat{y}_{t}^{i}\neq{}y_t\})$.\label{line:multiplicative_weights}
}
}
\end{algorithm}
The following proof is included for completeness and follows up to minor modifications the work of \cite{foster2018logistic}.
\begin{proof}\textbf{of Proposition~\ref{prop:adaboost}}
We begin with the first part of the proposition.
Denote the number of mistakes of the $i$-th expert (which is the combination of the first $i$ weak learners) by
\[
M_{i} = \sum_{t=1}^{T}\ind\{\hat{y}_{t}^{i}\neq{}y_t\}=\sum_{t=1}^{T}\ind\{\argmax_{k}s_{t}^{i}(k)\neq{}y_t\},
\]
with the convention that $M_{0}=T$. The weights $v_{t}^{i}$ simply implement the multiplicative weights strategy, and so \pref{lem:multiplicative_weights_conc}, which gives a concentration bound based on Freedman's inequality implies that with probability at least $1-\delta$,\footnote{%
Note that previous online boosting works \citep{beygelzimer2015optimal,jung2017online} use a simpler Hoeffding bound at this stage, which picks up an extra $\sqrt{T}$ term. For their results this is not a dominant term, but in our case it can spoil the improvement given by improper logistic regression, and so we use Freedman's inequality to remove it.
}
\begin{equation}
\label{eq:multiplicative_weights_bound}
\sum_{t=1}^{T}\ind\{\hat{y}_{t}\neq{}y_t\} \leq{} 4\min_{i}M_{i} + 2\log(N/\delta).
\end{equation}

Note that if $k^{\star}\defeq\argmax_{k}s_{t}^{i-1}(k)\neq{}y_t$, then $[\sigma(s_{t}^{i-1})]_{k^{\star}}\geq{}[\sigma(s_{t}^{i-1})]_{y_t}$ and $\sigma(s_{t}^{i-1})\in\Delta_{K}$ imply $[\sigma(s_{t}^{i-1})]_{y_t}\leq{}1/2$, which then implies $\sum_{k\neq{}y_t}[\sigma(s_{t}^{i-1})]_{k}\geq{}1/2$ and finally
\begin{equation}
\label{eq:cost_matrix_mistakes}
-\sum_{t=1}^{T}\widehat{C}_{t}^{i}(y_t, y_t) = \sum_{t=1}^T\sum_{k\neq{}y_t}[\sigma(s_{t}^{i-1})]_{k} \geq{} \frac{M_{i-1}}{2}.
\end{equation}
This also holds for $i=1$ because $s_{t}^{0}=0$ and $-\widehat{C}_{t}^{1}(y_t, y_t)=(K-1)/K\geq{}1/2$.

Define the difference between the total loss of the $i$-th and $(i-1)$-th expert to be
\[
\Delta_{i} = \sum_{t=1}^{T}\ell(s_{t}^{i}, y_t) - \ell(s_{t}^{i-1}, y_t).
\]
By the assumption on the employed regression algorithm, we have
\[
\Delta_{i} \leq{} \inf_{\alpha\in\brk*{-2,2}}\brk*{\sum_{t=1}^{T}\ell(\alpha{}\be_{l_{t}^{i}} + s_{t}^{i-1},y_t) - \ell(s_{t}^{i-1}, y_t)} + \mathcal{R}(T).
\]
By \pref{lem:logistic_ub} each term in the sum above satisfies
\[
\ell(\alpha{}\be_{l_{t}^{i}} + s_{t}^{i-1},y_t) - \ell(s_{t}^{i-1}, y_t)
\leq{} \left\{
\begin{array}{ll}
(e^{\alpha}-1)[\sigma(s_{t}^{i-1})]_{l_{t}^{i}} = (e^{\alpha}-1)\widehat{C}_{t}^{i}(y_t, l_{t}^{i}),\quad &l_{t}^{i} \neq{}y_t,\\
(e^{-\alpha}-1)(1-[\sigma(s_{t}^{i-1})]_{y_t}) = -(e^{-\alpha}-1)\widehat{C}_{t}^{i}(y_t, y_t),\quad{} &l_{t}^{i} =y_t.
\end{array}
\right.
\]
With notation $w^{i}=-\sum_{t=1}^{n}\widehat{C}_{t}^{i}(y_t, y_t)$, $c_{+}^{i}=-\frac{1}{w^i}\sum_{t:l_{t}^{i}=y_t}\widehat{C}_{t}^{i}(y_t, y_t)$, and $c_{-}^{i}=\frac{1}{w^i}\sum_{t:l_{t}^{i}\neq{}y_t}\widehat{C}_{t}^{i}(y_t, l_t^{i})$, we rewrite 
\[
\inf_{\alpha\in\brk*{-2,2}}\brk*{\sum_{t=1}^{n}\ell(\alpha{}\be_{l_{t}^{i}} + s_{t}^{i-1},y_t) - \ell(s_{t}^{i-1}, y_t)} 
= w^{i}\cdot\inf_{\alpha\in\brk*{-2,2}}\brk*{(e^{\alpha}-1)c_{-}^{i} + (e^{-\alpha}-1)c_{+}^{i}}.
\]
One can verify that $w^{i}>0$,  $c_{-}^{i}, c_{+}^{i}\geq{}0$, $c_{+}^{i}-c_{-}^{i}=\gamma_{i}\in\brk*{-1,1}$ and $c_{+}^{i} + c_{-}^{i}\leq{}1$. 
By \pref{lem:logistic_inf}, it follows that
\[
w^{i}\cdot\inf_{\alpha\in\brk*{-2,2}}\brk*{(e^{-\alpha}-1)c_{-}^{i} + (e^{\alpha}-1)c_{+}^{i}}
\leq{} -\frac{w^{i}\gamma_{i}^{2}}{2}.
\]

Summing $\Delta_{i}$ over $i\in [N]$, we have
\begin{equation}
\label{eq:delta_sum}
\sum_{t=1}^{T}\ell(s_{t}^{N}, y_t) - \sum_{t=1}^{T}\ell(s_{t}^{0}, y_t)  = \sum_{i=1}^{T}\Delta_i
\leq{} -\frac{1}{2}\sum_{i=1}^{N}w^{i}\gamma_{i}^{2} + N\mathcal{R}(T).
\end{equation}
We lower bound the left hand side as
\[
\sum_{t=1}^{T}\ell(s_{t}^{N}, y_t) - \sum_{t=1}^{T}\ell(s_{t}^{0}, y_t)
\geq{} - \sum_{t=1}^{T}\ell(s_{t}^{0}, y_t) = -T\log(K),
\]
where the inequality uses non-negativity of the logistic loss and the equality is a direct calculation from $s_{t}^{0}=0$.
Next we upper bound the right-hand side of \pref{eq:delta_sum}. Since $w^{i}=-\sum_{t=1}^{T}\widehat{C}_{t}^{i}(y_t, y_t)$, Eq.~\pref{eq:cost_matrix_mistakes} implies
\begin{align*}
-\frac{1}{2}\sum_{i=1}^{N}w^{i}\gamma_{i}^{2} 
\leq  -\frac{1}{4}\sum_{i=1}^{N}M_{i-1}\gamma_{i}^{2} 
\leq{} -\min_{i\in\brk*{N}}M_{i-1}\cdot\frac{1}{4}\sum_{i=1}^{N}\gamma_{i}^{2} 
\leq{} -\min_{i\in\brk*{N}}M_{i}\cdot\frac{1}{4}\sum_{i=1}^{N}\gamma_{i}^{2}.
\end{align*}

Combining our upper and lower bounds on $\sum_{i=1}^{N}\Delta_{i}$ now gives
\begin{equation}
\label{eq:multiclass_boosting_combined}
-T\log(K) \leq{} -\frac{1}{2}\sum_{i=1}^{N}w^{i}\gamma_{i}^{2} + N\mathcal{R}(T)\leq{} -\min_{i\in\brk*{N}}M_{i}\cdot\frac{1}{4}\sum_{i=1}^{N}\gamma_{i}^{2} + N\mathcal{R}(T).
\end{equation}
Rearranging, we have
\[
\min_{i\in\brk*{N}}M_{i} \leq{} O\prn*{\frac{T\log(K)}{\sum_{i=1}^{N}\gamma_{i}^{2}}} + O\prn*{\frac{N\mathcal{R}(T)}{\sum_{i=1}^{N}\gamma_{i}^{2}}}.
\]
Returning to \pref{eq:multiplicative_weights_bound}, this implies that with probability at least $1-\delta$,
\[
\sum_{t=1}^{T}\ind\{\hat{y}_{t}\neq{}y_t\} \leq{} O\prn*{\frac{T\log(K)}{\sum_{i=1}^{N}\gamma_{i}^{2}}} + O\prn*{\frac{N\mathcal{R}(T)}{\sum_{i=1}^{N}\gamma_{i}^{2}}} + 2\log(N/\delta),
\]
which finishes the first part of the proof.

By the definition of the cost matrices, the weak learning condition
\[
\sum_{t=1}^{T}C_{t}^{i}(y_t, l_{t}^{i}) \leq{} \sum_{t=1}^{T}\mathbb{E}_{k\sim{}u_{\gamma, y_t}}\brk*{C_{t}^{i}(y_t, k)} + S
\]
implies
\[
\sum_{t=1}^{T}\widehat{C}_{t}^{i}(y_t, l_{t}^{i}) \leq{} \sum_{t=1}^{T}\mathbb{E}_{k\sim{}u_{\gamma, y_t}}\brk*{\widehat{C}_{t}^{i}(y_t, k)} + KS
\]
Expanding the definitions of $u_{\gamma, y_t}$ and $\widehat{C}_{t}^{i}$, we have
\[
\mathbb{E}_{k\sim{}u_{\gamma, y_t}}\brk*{\widehat{C}_{t}^{i}(y_t, k)} = \prn*{\frac{1-\gamma}{K}}\prn*{[\sigma(s_{t}^{i-1})]_{y_t}-1) + \sum_{k\neq{}y_t}[\sigma(s_{t}^{i-1})]_{k}} + \gamma{}([\sigma(s_{t}^{i-1})]_{y_t}-1) = \gamma{}\widehat{C}_{t}^{i}(y_t, y_t).
\]
So we have
\[
\sum_{t=1}^{T}\widehat{C}_{t}^{i}(y_t, l_{t}^{i}) \leq{} \gamma\sum_{t=1}^{T}\widehat{C}_{t}^{i}(y_t, y_t) + KS, 
\]
or, since $\widehat{C}_{t}^{i}(y_t, y_t)<0$,
\[
\gamma_{i} \geq{} \gamma - \frac{KS}{w^{i}},
\]
where $w^{i}=-\sum_{t=1}^{n} \widehat{C}_{t}^{i}(y_t, y_t)$ as in the first part. 
Since $a \geq b - c$ implies $a^2 \geq b^2 - 2bc$ for non-negative $a, b$ and $c$, 
we further have $\gamma_{i}^{2}\geq{}\gamma^{2}-2\frac{\gamma{}KS}{w^{i}}$.

Returning to the first inequality in \eqref{eq:multiclass_boosting_combined}, the bound we just proved implies
\begin{align*}
-T\log(K) &\leq{} -\frac{1}{2}\sum_{i=1}^{N}w^{i}\gamma^{2} + \gamma{}KSN + N\mathcal{R}(T) \\
&\leq{} -\frac{\gamma^2}{4}\sum_{i=1}^{N}M_{i-1}+ \gamma{}KSN + N\mathcal{R}(T)  \tag{by \pref{eq:cost_matrix_mistakes}}\\
&\leq{} -\min_{i\in\brk*{N}}M_{i} \cdot \frac{\gamma^{2}N}{4} + \gamma{}KSN + N\mathcal{R}(T).
\end{align*}
From here we proceed as in the first part of the proof to get the result.
\end{proof}

\begin{lemma}[Freedman's Inequality \citep{beygelzimer2011contextual}]
\label{lem:freedman}
Let $(Z_t)_{t\leq{}n}$ be a real-valued martingale difference sequence adapted to a filtration $(\mathcal{J}_t)_{t\leq{}n}$ with $\abs{Z_t}\leq{}R$ almost surely. For any $\eta\in[0, 1/R]$, with probability at least $1-\delta$,
\begin{equation}
\label{eq:freedman}
\sum_{t=1}^{n}Z_t \leq{} \eta(e-2)\sum_{t=1}^{n}\mathbb{E}\brk*{Z_t^{2} \mid{} \mathcal{J}_t} + \frac{\log(1/\delta)}{\eta}
\end{equation}
for all $\eta\in\brk*{0, 1/R}$.
\end{lemma}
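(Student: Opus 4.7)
The plan is to prove Freedman's inequality by the classical exponential supermartingale method, following Freedman's original approach. The core idea is to construct a nonnegative supermartingale whose magnitude controls the partial sums $S_n \defeq \sum_{t=1}^n Z_t$, and then apply Markov's inequality.

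The first step is to establish the elementary analytic lemma: for every real $x$ with $|x| \le 1$, one has
\begin{align*}
    e^x \le 1 + x + (e-2)x^2.
\end{align*}
This follows by comparing the Taylor series of $e^x$ term by term with $1+x+(e-2)x^2$ on $[-1,1]$, or equivalently by checking that $g(x) = 1+x+(e-2)x^2 - e^x$ is nonnegative on $[-1,1]$ (it vanishes at $x=\pm 1$ and at $x=0$, and convexity arguments pin down the sign in between). This bound is the reason the constant $(e-2)$ appears in the statement.

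Next, fix $\eta \in [0,1/R]$ and define
\begin{align*}
    V_t \defeq \sum_{s=1}^t \mathbb{E}[Z_s^2 \mid \mathcal{J}_s], \qquad M_t \defeq \exp\!\bigl(\eta S_t - (e-2)\eta^2 V_t\bigr),
\end{align*}
with $M_0 = 1$. The plan is to show $M_t$ is a supermartingale with respect to $(\mathcal{J}_t)$. Using the martingale property $\mathbb{E}[Z_t \mid \mathcal{J}_t] = 0$, the almost-sure bound $|\eta Z_t| \le 1$, and the elementary lemma above applied to $x = \eta Z_t$, one gets
\begin{align*}
    \mathbb{E}[e^{\eta Z_t} \mid \mathcal{J}_t]
    &\le 1 + \eta\,\mathbb{E}[Z_t \mid \mathcal{J}_t] + (e-2)\eta^2\,\mathbb{E}[Z_t^2 \mid \mathcal{J}_t] \\
    &= 1 + (e-2)\eta^2\,\mathbb{E}[Z_t^2 \mid \mathcal{J}_t] \\
    &\le \exp\!\bigl((e-2)\eta^2\,\mathbb{E}[Z_t^2 \mid \mathcal{J}_t]\bigr),
\end{align*}
using $1+u \le e^u$ at the last step. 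Rearranging yields $\mathbb{E}[M_t \mid \mathcal{J}_t] \le M_{t-1}$, so $(M_t)$ is a supermartingale with $\mathbb{E}[M_n] \le 1$.

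Finally, apply Markov's inequality to $M_n$ at level $1/\delta$: with probability at least $1-\delta$,
\begin{align*}
    \eta S_n - (e-2)\eta^2 V_n = \log M_n \le \log(1/\delta),
\end{align*}
which rearranges exactly to the desired bound $\sum_{t=1}^n Z_t \le \eta(e-2) V_n + \log(1/\delta)/\eta$. The main delicacy in this proof is the analytic inequality $e^x \le 1 + x + (e-2)x^2$ on $[-1,1]$, together with ensuring $|\eta Z_t| \le 1$ almost surely (which is exactly why the hypothesis restricts $\eta \le 1/R$); once these are in place, the remainder of the argument is routine martingale machinery.
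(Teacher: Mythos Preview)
The paper does not supply a proof of this lemma; it is quoted verbatim from \citet{beygelzimer2011contextual} as a known tool and used as a black box in the proof of \pref{lem:multiplicative_weights_conc}. Your argument is the standard exponential-supermartingale proof of Freedman's inequality and is correct for each fixed $\eta\in[0,1/R]$: the analytic bound $e^x\le 1+x+(e-2)x^2$ on $[-1,1]$ is valid, the supermartingale construction is sound, and the final Markov step yields exactly the stated inequality. One small point worth flagging is that the lemma as written repeats ``for all $\eta\in[0,1/R]$'' after the displayed inequality, which could be read as a claim uniform in $\eta$; your proof establishes the fixed-$\eta$ version, which is all that is ever used downstream in the paper (and is what the cited reference actually states). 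If uniformity were genuinely required, one would need an additional union bound over a geometric grid of $\eta$ values, but that is not needed here.
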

\begin{lemma}[Lemma 23 \citep{foster2018logistic}]
\label{lem:multiplicative_weights_conc}
With probability at least $1-\delta$, the predictions $(\hat{y}_t)_{t\leq{}n}$ generated by \pref{alg:boosting_multiclass} satisfy
\[
\sum_{t=1}^{T}\ind\crl*{\hat{y}_{t}\neq{}y_t} \leq{} 4\min_{i}\sum_{t=1}^{T}\mathbb{I}\crl*{\hat{y}_{t}^{i}\neq{}y_t} + 2\log(N/\delta).
\]
\end{lemma}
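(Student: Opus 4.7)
The plan is to combine a standard multiplicative-weights potential argument, which bounds the \emph{conditional} expected number of mistakes $\sum_t q_t$ with $q_t = \Pr(\hat y_t \neq y_t \mid \mathcal{F}_{t-1})$, with Freedman's inequality (\pref{lem:freedman}) to transfer that bound to the \emph{realized} number of mistakes with high probability. The choice of Freedman over Hoeffding is crucial: since the conditional variance of each mistake indicator is bounded by $q_t$ and $\sum_t q_t$ will scale with $\min_i M_i$ (not $T$), Freedman avoids the extra $\sqrt{T}$ term a Hoeffding bound would introduce --- the footnote in the proof of \pref{prop:adaboost} flags exactly this point.

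For the potential step, I would set $V_t = \sum_{i=1}^N v_t^i$ and $p_t^i = v_t^i / V_t$, so that $i_t$ is sampled from $p_t$ and $q_t = \sum_i p_t^i \mathbb{I}\{\hat y_t^i \neq y_t\}$. Since $e^{-x} = 1 - (1 - e^{-1})x$ at both endpoints $x = 0$ and $x = 1$, the multiplicative-weights update of \pref{alg:boosting_multiclass} gives
\[
V_{t+1} = \sum_{i=1}^N v_t^i \exp\!\prn*{-\mathbb{I}\{\hat y_t^i \neq y_t\}} = V_t\prn*{1 - (1 - e^{-1}) q_t}.
\]
Taking logarithms, using $\log(1-x) \leq -x$, telescoping, and lower-bounding $V_{T+1} \geq v_{T+1}^{i^\star} = \exp(-M_{i^\star})$ for $i^\star = \argmin_i M_i$ then yields the in-expectation-type bound
\[
\sum_{t=1}^T q_t \leq \tfrac{e}{e-1}\prn*{\min_i M_i + \log N}.
\]

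For the concentration step, I would apply \pref{lem:freedman} to the martingale difference sequence $Z_t = \mathbb{I}\{\hat y_t \neq y_t\} - q_t$, which satisfies $|Z_t| \leq 1$ and $\mathbb{E}[Z_t^2 \mid \mathcal{F}_{t-1}] = q_t(1-q_t) \leq q_t$. Choosing $\eta = 1$ gives, with probability at least $1-\delta$,
\[
\sum_{t=1}^T \mathbb{I}\{\hat y_t \neq y_t\} \leq (e-1)\sum_{t=1}^T q_t + \log(1/\delta),
\]
which together with the potential bound produces a guarantee of the form $c_1\min_i M_i + c_2 \log N + c_3\log(1/\delta)$. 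The main obstacle is therefore not conceptual but arithmetic: tuning the constants to match the stated $4$ and $2$. This can be sharpened either by using $\log(1-x) \leq -x - x^2/2$ in the potential step or by optimizing $\eta$ inside Freedman (rather than fixing $\eta = 1$), at the cost of a slightly more delicate calculation; the structure of the argument is unaffected.
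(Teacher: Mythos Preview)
The paper does not supply its own proof of this lemma; it is quoted verbatim from \citet{foster2018logistic} and invoked as a black box in the proof of \pref{prop:adaboost}. So there is no in-paper argument to compare against.

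Your proposal is the standard route for this kind of statement and is structurally sound: the multiplicative-weights potential step is carried out correctly (the identity $e^{-x}=1-(1-e^{-1})x$ on $\{0,1\}$ is exact, and the telescoping yields $\sum_t q_t \le \tfrac{e}{e-1}(\min_i M_i + \log N)$), and applying \pref{lem:freedman} to $Z_t=\ind\{\hat y_t\neq y_t\}-q_t$ with $\mathbb{E}[Z_t^2\mid\cdot]\le q_t$ is exactly the right move. With $\eta=1$ you obtain $\sum_t \ind\{\hat y_t\neq y_t\}\le e\min_i M_i + e\log N + \log(1/\delta)$, which is of the claimed form.

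One small caveat on your closing remarks about matching the constants $4$ and $2$ exactly. Neither of the sharpenings you propose achieves this: using $\log(1-x)\le -x - x^2/2$ goes in the wrong direction (it tightens the \emph{upper} bound on $\log V_{T+1}$, not the resulting upper bound on $\sum_t q_t$), and optimizing $\eta$ in Freedman cannot simultaneously force the $\log N$ coefficient down to $2$ (which needs $\eta\le 1/e$) and the $\log(1/\delta)$ coefficient down to $2$ (which needs $\eta\ge 1/2$). The bound you actually prove, $e\min_i M_i + e\log N + \log(1/\delta)$, is incomparable to $4\min_i M_i + 2\log(N/\delta)$ in the additive terms, though both are of the same order. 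For the purposes of this paper the discrepancy is immaterial, since the lemma is only used to feed into $O(\cdot)$ bounds in \pref{prop:adaboost}; but your claim that ``tuning'' recovers the stated constants exactly should be dropped.
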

\begin{lemma}[Lemma 24 \citep{foster2018logistic}]
\label{lem:logistic_ub}
The multiclass logistic loss satisfies for any $z \in \reals^K$ and $y\in [K]$,
\[
\ell(z + \alpha{}\be_{l}, y) - \ell(z,y)
\leq{} \left\{
\begin{array}{ll}
(e^{\alpha}-1)[\sigma(z)]_{l},\quad &l \neq{}y,\\
(e^{-\alpha}-1)(1-[\sigma(z)]_{y}),\quad{} &l=y.
\end{array}
\right.
\]
\end{lemma}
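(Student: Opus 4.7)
The plan is direct algebraic manipulation followed by a single application of the elementary inequality $\log(1+x) \le x$ (valid whenever $1+x > 0$). I would not need any machinery from the body of the paper beyond the definition $\ell(z,y) = -z_y + \log\sum_{k}e^{z_k}$.

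The first step is to expand the difference explicitly. Since only the $l$-th coordinate of the logits is shifted and only the $y$-th coordinate contributes to the linear term,
\begin{align*}
\ell(z+\alpha\be_l, y) - \ell(z,y)
&= -\alpha\,\mathbb{I}\{l=y\} + \log\frac{\sum_k e^{z_k + \alpha\mathbb{I}\{k=l\}}}{\sum_k e^{z_k}} \\
&= -\alpha\,\mathbb{I}\{l=y\} + \log\bigl(1 + (e^\alpha-1)[\sigma(z)]_l\bigr).
\end{align*}
This identity already handles both cases; it remains to dispatch them.

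In the case $l \neq y$ the indicator vanishes, and $\log(1+(e^\alpha-1)[\sigma(z)]_l) \le (e^\alpha-1)[\sigma(z)]_l$ by $\log(1+x)\le x$, which is the claimed bound.

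In the case $l = y$, set $p = [\sigma(z)]_y$. The key observation (really the only nontrivial step) is the identity
\[
1 + (e^\alpha - 1)p \;=\; e^{\alpha}\bigl(p + (1-p)e^{-\alpha}\bigr),
\]
which symmetrizes the expression: its logarithm equals $\alpha + \log(1 + (1-p)(e^{-\alpha}-1))$. Cancelling the $-\alpha$ term and applying $\log(1+x)\le x$ once more (noting $p + (1-p)e^{-\alpha} > 0$, so the argument is positive) gives
\[
-\alpha + \log\bigl(1+(e^\alpha-1)p\bigr) = \log\bigl(1 + (1-p)(e^{-\alpha}-1)\bigr) \le (e^{-\alpha}-1)(1-p),
\]
which is precisely the stated bound for the $l=y$ case.

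There is no real obstacle here; the only thing to spot is the rewriting $1 + (e^\alpha-1)p = e^\alpha(p + (1-p)e^{-\alpha})$ which converts the asymmetric-looking expression into the symmetric form needed to isolate $(1-p)$ and $(e^{-\alpha}-1)$. Once that rearrangement is in hand, both cases reduce to a one-line invocation of $\log(1+x)\le x$.
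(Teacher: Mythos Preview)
Your proof is correct. The identity $\ell(z+\alpha\be_l,y)-\ell(z,y)=-\alpha\,\mathbb{I}\{l=y\}+\log\bigl(1+(e^\alpha-1)[\sigma(z)]_l\bigr)$ is derived correctly, and the two cases are then immediate from $\log(1+x)\le x$, with the rewriting $1+(e^\alpha-1)p=e^\alpha\bigl(1+(1-p)(e^{-\alpha}-1)\bigr)$ being exactly what is needed for the $l=y$ case.

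There is nothing to compare against here: the present paper does not prove this lemma at all but merely quotes it as Lemma~24 of \citet{foster2018logistic}. Your argument is the standard one and would serve perfectly well as a self-contained proof.
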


\begin{lemma}[\cite{jung2017online}]
\label{lem:logistic_inf}
For any $A,B\geq{}0$ with $A-B\in\brk*{-1, +1}$ and $A+B\leq{}1$,
\[
\inf_{\alpha\in\brk*{-2,2}}\brk*{A(e^{\alpha}-1) + B(e^{-\alpha}-1)} \leq{} -\frac{(A-B)^{2}}{2}.
\]
\end{lemma}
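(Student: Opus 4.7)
The claim is a purely one-dimensional inequality, so I would attack it by exhibiting a single well-chosen $\alpha \in [-2,2]$ that achieves the bound, rather than solving the minimization problem in closed form. The natural reparametrization is $s \defeq A+B$ and $\gamma \defeq A-B$, with the hypotheses becoming $s \in [0,1]$ and $|\gamma| \leq s \leq 1$. Under this substitution $A = (s+\gamma)/2$ and $B = (s-\gamma)/2$, and the objective collapses to the clean form
\begin{align*}
A(e^\alpha - 1) + B(e^{-\alpha} - 1) \;=\; s(\cosh\alpha - 1) + \gamma \sinh\alpha\,.
\end{align*}
Because $|\gamma| \leq 1$, the candidate $\alpha = -\gamma$ lies in $[-1,1] \subseteq [-2,2]$, so it is admissible; evaluating the expression at this point yields $g(\gamma) \defeq s(\cosh\gamma - 1) - \gamma \sinh\gamma$, and the whole proof reduces to showing $g(\gamma) \leq -\gamma^2/2$.

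The key manipulation is to expand both hyperbolic functions in power series and collect terms by degree. Using $\cosh\gamma - 1 = \sum_{k\geq 1} \gamma^{2k}/(2k)!$ and $\gamma \sinh\gamma = \sum_{k \geq 1} \gamma^{2k}/(2k-1)!$, combining the two series term-by-term gives
\begin{align*}
g(\gamma) \;=\; \sum_{k\geq 1} \frac{s - 2k}{(2k)!}\,\gamma^{2k}\,.
\end{align*}
The crucial observation is that since $s \leq 1 < 2k$ for every $k \geq 1$, every coefficient is strictly negative. In particular the $k=1$ term alone contributes $\frac{s-2}{2}\gamma^2 \leq -\gamma^2/2$ (using $s \leq 1$), and all subsequent terms are nonpositive, so $g(\gamma) \leq \frac{s-2}{2}\gamma^2 \leq -\gamma^2/2$ as desired.

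I do not expect any real obstacle: once one writes things in the $(s,\gamma)$ coordinates and tries $\alpha = -\gamma$, the hyperbolic/Taylor computation is essentially forced. The only mild subtlety is verifying admissibility of $\alpha = -\gamma$ (immediate from $|\gamma| \leq s \leq 1$), and observing that the infinite tail in the series carries the correct sign rather than having to be bounded in absolute value; this is what makes the constant $1/2$ on the right-hand side tight precisely when $s = 1$ and $\gamma \to 0$.
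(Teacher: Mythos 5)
Your proof is correct. Note that the paper never proves this statement itself: it is imported verbatim from \citet{jung2017online} (the lemma is stated with a citation only), so there is no in-paper argument to compare against. Your derivation is a valid, self-contained replacement: the reparametrization $s=A+B$, $\gamma=A-B$ does turn the objective into $s(\cosh\alpha-1)+\gamma\sinh\alpha$; the candidate $\alpha=-\gamma$ is admissible since $|\gamma|\le 1$ by hypothesis (indeed $|\gamma|\le s\le 1$ from $A,B\ge 0$); and the series identity $s(\cosh\gamma-1)-\gamma\sinh\gamma=\sum_{k\ge 1}\frac{s-2k}{(2k)!}\,\gamma^{2k}$ checks out, with the $k=1$ term already at most $-\gamma^2/2$ (using $s\le 1$) and every higher-order term nonpositive. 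This is at least as elementary as the usual arguments in the online-boosting literature, which typically bound the exponentials by a quadratic and plug in a clipped approximate minimizer of that quadratic; your choice $\alpha=-\gamma$ avoids any case analysis on the clipping, and your closing observation that the constant $1/2$ is attained in the limit $s=1$, $\gamma\to 0$ is a correct bonus showing the bound cannot be improved in this form.
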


\begin{proof}\textbf{of Theorem~\ref{thm: boosting regression}}
We reduce the problem successively to a regular regression problem for which we can induce Theorem~\ref{thm: main regret}.
Observe that for all $k\neq l_t$, there exists a constant $c_t\in\reals$ such that $\hat s_{t,k}=s_{t,k}+c$. Hence $[\sigma(\hat s_t)]_{k\in[K]\setminus y_t}\propto [\sigma(s_t)]_{k\in[K]\setminus y_t}$.
For the loss, this implies
\[-\log([\sigma(\hat s_t)]_k)+\log([\sigma(s_t+\alpha \be_{l_t})]_k)=-\log(1-[\sigma(\hat s_t)]_{l_t})+\log(1-[\sigma(s_t+\alpha \be_{l_t})]_{l_t})\,.\]
By construction $[\sigma(\hat s_t)]_{l_t}=[\sigma(\zeta_t)]_{1}$ and $[\sigma( s_t+\alpha\be_{l_t})]_{l_t}=[\sigma( \tilde s_t+\alpha \be_{1})]_{1}$, this implies
\[\ell(\hat s_t, y_t)-\ell(s_t+\alpha\be_{l_t}, y_t )= \ell(\zeta_t, 1+\ind\{y_t \neq l_t\})-\ell(\tilde s_t+\alpha\be_1, 1+\ind\{y_t \neq l_t\})\,.\]
Shifting the logits by a constant does not change the distribution or the loss, hence setting
\[
W_\alpha = \begin{pmatrix}1&\frac{1}{2}\alpha\\-1 & -\frac{1}{2}\alpha \end{pmatrix},\, x_t = \begin{pmatrix}\frac{1}{2}(\tilde s_{t,1}-\tilde s_{t,2})\\
1\end{pmatrix}\,,
\]
we have
\[
\ell(\tilde s_t+\alpha\be_1, 1+\ind\{y_t \neq l_t\})= \ell(W_\alpha x_t, 1+\ind\{y_t \neq l_t\})\,.
\]
The algorithm clips the logits to the range $[-\log(T),\log(T)]$ by $\tilde x_{t,1}= \min\{\log(T),\max\{-\log(T),x_{t,1}\}\}$. We bound the induced error by the clipping
\begin{align*}
    &\max_{x_{t,1}\in\reals,\alpha\in[-2,2],y\in\{1,2\}} \ell(W_\alpha \tilde x_t,y)-\ell(W_\alpha x_t,y)\\
    &= \max_{x_{t,1}\in\reals,\alpha\in[-2,2]} \ell(W_\alpha \tilde x_t,1)-\ell(W_\alpha x_t,1)\tag{due to symmetry over the labels}\\
    &= \max_{x_{t,1}\in\reals,\alpha\in[-2,2]}-\log( \frac{\exp(\tilde x_{t,1}+\frac{1}{2}\alpha)}{\exp(\tilde x_{t,1}+\frac{1}{2}\alpha)+\exp(-\tilde x_{t,1}-\frac{1}{2}\alpha)} ) + \log( \frac{\exp(x_{t,1}+\frac{1}{2}\alpha)}{\exp(x_{t,1}+\frac{1}{2}\alpha)+\exp(-x_{t,1}-\frac{1}{2}\alpha)} ) \\
    &= \max_{x_{t,1}\in\reals,\alpha\in[-2,2]}\log(\frac{1+\exp(-2\tilde x_{t,1}-\alpha)}{1+\exp(-2 x_{t,1}-\alpha)})\\
    &= \max_{x_{t,1}\in\reals,\alpha\in[-2,2]}\log(\frac{1+\exp(-2\log(T)-\alpha)}{1+\exp(-2 x_{t,1}-\alpha)}) \tag{$\tilde x_{t,1}<x_{t,1}$ implies $\tilde x_{t,1}=\log(T)$ }\\
    &=\log(1+\exp(-2\log(T)+2))\leq \frac{e^2}{T^2}\,.
\end{align*}
Combining everything up till now yields
\begin{align*}
    \sum_{t=1}^T \ell(\hat s_t, y_t)-\ell(s_t, y_t)&\leq T\frac{e^2}{T^2}+\sum_{t=1}^T \ell(\zeta_t, 1+\ind\{y_t \neq l_t\})-\ell(W_\alpha\tilde x_t, 1+\ind\{y_t \neq l_t\})\\
    &\leq \mathcal{O}(1)+\max_{W:\norm{W}_{2,\infty\leq 2}}\sum_{t=1}^T \ell(\zeta_t, 1+\ind\{y_t \neq l_t\})-\ell(W\tilde x_t, 1+\ind\{y_t \neq l_t\})\,.
\end{align*}
Finally invoking Theorem~\ref{thm: main regret} with $d=2$, $B=2$, $R=\log(T)+1, K=2$ completes the proof.
\end{proof}

\section{Reduction to AIOLI for binary logistic regression}
\label{sec:BinaryAIOLIRedux}
The algorithm AIOLI proposed in \cite{jezequel20a} for the case of $K=2$ proposes to use a regularizer $\phi_t(W) \propto \sum_{y=1}^K\ell(Wx_t,y)$ without the additional bias term $B_t$ introduced by our algorithm. We note that the proposal in \cite{jezequel20a} is for an alternative formulation of binary case and uses a different multiplicative constant (i.e. $BR$ as opposed to $BR + \ln(K)$), however the core of the algorithm is in the choice of the regularizer $\phi_t(W)$. In this section we show that for the binary case (due to the inherent symmetries of the problem) instantiating our algorithm with $B_t = 0$ (as opposed to our proposal) leads to the exact same predictions. We want to emphasize that this emerges from the special structure of the binary case, but does not hold for  $K > 2$.

Let us begin by understanding certain symmetries of the logistic regression problem. Consider splitting the (vectorized) parameter space $\reals^{Kd}$ into two orthogonal spaces $\mathcal{V}\defeq \{\mathbf{1}_K \otimes z\,\mid\, z\in\reals^d\}$ and its orthogonal space $\mathcal{V}^\perp$. Given any $W \in \reals^{K \times d}$ define the projections $W^{\V}$ and $W^{\V^{\perp}}$ obtained by projecting $\overrightarrow{W}$ onto $\V$ and $\V^{\perp}$, and reshaping into $K \times d$ matrices, so that 
\[W = W^{\V} + W^{\V^{\perp}}.\]
Now note that, since $\sigma(z+\gamma\mathbf{1}_K)=\sigma(z)$ for any $\gamma$, and for any $x \in \reals^d$ and $V \in \V$ we have $Vx \propto \mathbf{1}_K$, our predictions only depend on $W^{\V^{\perp}}_t$, i.e.
\[ \sigma(W_tx_t) = \sigma(W^{\V^{\perp}}_tx_t).\]

Furthermore consider any $V\in \mathcal{V}$, and let $z \in \reals^d$ be such that $V=\mathbf{1}_K\otimes z$. For any $x_t\in\reals^d$ and for any $W$ the hessian $\nabla^2\ell_t(W)$ and gradient satisfy
\begin{align*}
&\nabla^2\ell_t(W)V= ((\diag(\sigma_t(W))-\sigma_t(W)\sigma_t(W)^\top)\mathbf{1}_K)\otimes (x_tx_t^\top z_t) = 0\\
&\ip{\nabla\ell_t(W), V} = \ip{\sigma_t(W) - y_t, \mathbf{1}_K}\ip{x_t, z}=0\,.
\end{align*}
It can now be seen that the optimization problem solved at every step can therefore be split by considering $W = U+V$, as
\begin{align*}
    &\min_{W\in\reals^{d\times K}}\lambda\norm{W}_F^2+\sum_{s=1}^{t-1}\hat\ell_s(W)+\phi_t(W)\\
    &=\min_{U\in\mathcal{V}^\perp}\left(\lambda\norm{U}_F^2+\sum_{s=1}^{t-1}\hat\ell_s(U)+\phi_t(U)\right)+\min_{V\in\mathcal{V}} \left(\lambda\norm{V}_F^2+\ip{V,B_t}\right)\,.
\end{align*}
Note that the optimization over $V$ is irrelevant for the eventual predictions. We now show that in the binary case, $B_t\in\mathcal{V}$ which means that for any $U \in \V^{\perp}$, $\ip{B_t, U}=0$, thereby implying that setting it to $0$ does not affect the eventual predictions. 

\paragraph{Binary case.}
Let $K=2$ and through the run of the algorithm denote $\sigma(W_tx_t)_1=p_t$ and therefore we have that $\sigma(W_tx_t)_2=1-p_t$. Denoting $\sigma_t = \sigma(W_tx_t)$, we have
\[\nabla^2 \ell_t(W_t) = (\diag(\sigma_t)-\sigma_t\sigma_t^\top)\otimes(x_tx_t^\top) = p_t(1-p_t)\begin{pmatrix}1&-1\\-1&1
\end{pmatrix}\otimes x_tx_t^\top\,.
\]
Hence setting $M_t=\sum_{s=1}^tp_s(1-p_s)x_sx_s^\top$, we have
\begin{align*}
    A_t = \begin{pmatrix}
    \lambda\identity_{d}+M_t&-M_t\\-M_t&\lambda\identity_{d}+M_t
    \end{pmatrix}\,.
\end{align*}
With some algebra, we can show that
\begin{align*}
    &A_t^{-1} = \begin{pmatrix}
    \lambda^{-1}\identity_{d}-\tilde M_t&\tilde M_t\\\tilde M_t&\lambda^{-1}\identity_{d}-\tilde M_t\end{pmatrix}\\
    &\tilde M_t = \lambda^{-1}(2M_t+\lambda\identity_d)^{-1}M_t\,.
\end{align*}
Hence the bias term for $K=2$ is given by
\begin{align*}
    B_t &= \frac{1}{2}\mathbf{1}_2\otimes x_t - \frac{1}{2}A_t\diag_\otimes(A_t^{-1})\mathbf{1}_2\otimes x_t\\
    &=\frac{1}{2}\mathbf{1}_K\otimes x_t - \frac{1}{2}\mathbf{1}_2\otimes((\identity_d-\lambda\tilde M_t) x_t)\\
    &=\frac{\lambda}{2}\mathbf{1}_K\otimes(\tilde M_t x_t) \in \mathcal{V}\,.
\end{align*} 

\section{Efficient Implementation of Algorithm \ref{alg: ftrl}}
\label{sec:runtime}
In this section we provide a proof of Theorem~\ref{thm:runtime}. We begin by noting that the algorithm can be implemented by computing the vector $z_t = W_tx_t$ and not the full matrix $W_t$.
We show that this can be done efficiently. Note that the algorithm computes $W_t$ as the following
\[W_t = \argmin_{W\in\reals^{K\times d}} \norm{\overrightarrow{W}}^2_{A_{t-1}} + \ip{\overrightarrow{W},G_{t-1}} + \phi_t(W).\]
Since $A_{t-1} \succeq \lambda I_{Kd}$, the above minimization has a unique solution which can be obtained via the following first order optimality condition,
\begin{align*}
 2A_{t-1} \overrightarrow{W}_t + G_{t-1} -\frac{1}{2}A_{t-1}\diag_\otimes(A_{t-1}^{-1})(\mathbf{1}_K\otimes x_t)+\sigma(W_t x_t)\otimes x_t =0\,.
\end{align*}
Rearranging leads to
\begin{align*}
    &\overrightarrow{W}_t = -\frac{1}{2}A_{t-1}^{-1}(G_{t-1}+ \sigma(W_t x_t)\otimes x_t)+\frac{1}{4}\diag_\otimes(A_{t-1}^{-1})(\mathbf{1}_K\otimes x_t).
\end{align*}
Remember that $A_{t-1} \in \reals^{Kd \times Kd}$. For a matrix $M \in \reals^{Kd \times Kd}$, and for $i,j \in [K]$, we denote by $[[M]]_{i,j}$ the $d \times d$ matrix obtained by segmenting $M$ into $K^2$ continguous submatrix blocks of size $d \times d$ in the natural manner, and taking the $(i,j)^{th}$ block. Now define the matrix $\tilde A \in \reals^{K \times K}$, whose $(i,j)^{th}$ entry is given by
\[\left[\tilde A\right]_{i,j} = \frac{1}{2} x_t^{\top}[[A_{t-1}^{-1}]]_{i,j}x_t.\]
Further define $\tilde{g} \in \reals^K$ as a vector whose $k^{th}$ entry is given by
\[\left[\tilde g \right]_k = -\frac{1}{2}\ip{x_t,(A_{t-1}^{-1}G_{t-1})_k}+\frac{1}{4} x_t^{\top}[[A_{t-1}^{-1}]]_{k,k}x_t.\]
We first note the following implications, \[A_{t-1}\succ \lambda \identity_{Kd}\,\Rightarrow\,A_{t-1}^{-1}\prec \lambda^{-1} \identity_{Kd},\]
\[A_{t-1}\preceq (TR^2 + \lambda) \identity_{Kd}\,\Rightarrow\,A_{t-1}^{-1}\succeq (TR^2 + \lambda)^{-1} \identity_{Kd}\]
Furthermore note that for any $v \in \reals^K$, we have that 
\[ v^{\top} \tilde{A} v = (v \otimes x_t)^{\top} A_{t-1}^{-1} (v \otimes x_t).\]
If $\lambda_{\max}(M)$ denotes the largest eigenvalue of a matrix $M$, then the above equation implies that $\lambda_{max}(\tilde{A}) \leq \lambda_{\max}(A_{t-1}^{-1}) \|x_t\|_2^2 = \frac{R^2}{\lambda}$.

In terms of the computation of $z_t$, it can be seen that $z_t$ is the solution of the following equation 
\[z_t = \tilde g - \tilde A\sigma(z_t) \,,\]
which in turn is the first order optimality condition of
\[
z_t=\argmin_{z \in \reals^K} \left[\psi(z) \defeq \frac{1}{2}\norm{z}^2_{\tilde A^{-1}} -\ip{z, \tilde A^{-1}\tilde g}+\log\left(\sum_{k=1}^K\exp(z_k)\right)\right]\,.
\]
We can precondition the above optimization problem as follows: set $\tilde{z} = \tilde{A}^{-1/2}z$ and instead solve the following optimization problem 
\begin{equation}
\label{eqn:optprob2}
    \tilde{z}^*=\argmin_{\tilde z \in \reals^K} \tilde \psi(\tilde z) \defeq \argmin_{\tilde z\in\reals^K}\frac{1}{2}\norm{\tilde z}^2 -\ip{\tilde z, \tilde A^{-1/2}\tilde g}+\log\left(\sum_{k=1}^K\exp([\tilde{A}^{1/2}\tilde z]_k)\right)\,.
\end{equation}
It can be observed that the Hessian of the log term above is bounded by $\lambda_{\max}(\tilde A) \identity \preceq \frac{R^2}{\lambda}\identity_K$ and therefore the smoothness of the above optimization problem is bounded by $\left(1+\frac{R^2}{\lambda}\right)$. In particular this implies that $\tau$ steps of gradient descent, with step size = $\left(\frac{1}{1 + \frac{R^2}{\lambda}}\right)$,  generating the sequence $\{\tilde z_0 \ldots \tilde{z}_{\tau}\}$ ($\tilde z_0$ will be specified momentarily), on the above problem satisfies the following bound
\[ \|\tilde z_{\tau} - \tilde z^*\|^2 \leq \left(1+\frac{R^2}{\lambda}\right) \exp\left(-\frac{\tau}{\left(1+\frac{R^2}{\lambda}\right)} \right) \left(\tilde \psi(\tilde z_0) - \tilde \psi(\tilde z^*) \right).\]
This implies that 
\begin{align*}
    \|\tilde{A}^{1/2}\tilde z_{\tau} - z_t\|^2 &\leq \left(1+\frac{R^2}{\lambda}\right)^2\exp\left(-\frac{\tau}{\left(1+\frac{R^2}{\lambda}\right)} \right)\left(\tilde \psi(\tilde z_0) - \tilde \psi(\tilde z^*) \right).
\end{align*}
$\tilde\psi$ is $1$-strongly convex, and hence by the Polyak-Lojasiewicz (PL) inequality, \[
\tilde \psi(\tilde z_0) - \tilde \psi(\tilde z^*) \leq \frac{1}{2}\norm{\nabla\tilde\psi(\tilde z_0)}_2^2\,.
\]
Setting $\tilde{z}_0 = \tilde A^{-1/2}\tilde g$ yields
\[\norm{\nabla\tilde\psi(\tilde z_0)}_2^2=\norm{\tilde A^{1/2}\sigma(A^{1/2}\tilde{z}_0)}_2^2\leq \left(1+\frac{R^2}{\lambda}\right)\,.\]
We see that within $\tau$ steps of gradient descent on the problem \eqref{eqn:optprob2} we can obtain a vector $\hat{z}_t = \tilde{A}^{1/2}\tilde{z}_{\tau}$ such that 
\begin{align*}
    \|\hat{z}_t - z_t\|^2 
    & \leq \left(1+\frac{R^2}{\lambda}\right)^3\exp\left(-\frac{\tau}{\left(1+\frac{R^2}{\lambda}\right)} \right).
\end{align*}
Therefore setting $\tau = \left(1+\frac{R^2}{\lambda}\right) \log\left( \epsilon^{-1} \left(1+\frac{R^2}{\lambda}\right)^3 \right)$, we see that the error $\|\hat{z}_t - z_t\|^2 \leq \epsilon$.

In terms of computation, note that maintaining the inverse of $A_{t-1}$ requires $O(d^2K^3)$ time. This can be seen by the fact that the update to $A_{t-1}$ is of rank at most $K$ at every step and the update can be performed via the Sherman-Morrison-Woodbury formula. Having computed $A_{t-1}^{-1}$, it can be seen that the quantities $\tilde{A}, \tilde{g}, \tilde{A}^{1/2}, \tilde{A}^{-1/2}$ can all be computed in time $O(d^2K^2 + K^3)$. Having computed these quantities it is easy to see that every step of gradient descent on \eqref{eqn:optprob2} takes time at most $O(K^2)$. Therefore total running time for computing a vector $\hat z_t$ such that $\|\hat z_t - z_t\|^2 \leq  \epsilon$ in total time 
\[O\left( d^2K^3 + K^2\left(1 + \frac{R^2}{\lambda}\right) \log\left( \epsilon^{-1} \left(1+\frac{R^2}{\lambda}\right)^3 \right)\right).\]

\section{Challenges for Hessian-dominance based analysis}
\label{sec:hessian-dominance-failure}

\newcommand{\Tr}{\operatorname{Tr}}

In this section we describe the difficulty in extending the analysis technique of \citet{jezequel20a} to online logistic regression with more than 2 classes. This analysis is based on a Hessian-dominance technique which works as follows. We start from the instantaneous regret defined from Lemma~\ref{lem:skewed ftrl regret}:
\[\norm{\nabla \phi_t(W_t)-\nabla \ell_t(W_t)}^2_{ A_t^{-1}}-\norm{\nabla \phi_t(W_t)}^2_{A_{t-1}^{-1}}.\]
Since $A_{t} \succeq A_{t-1}$, this can be upper bounded by 
\begin{align*}
  &\norm{\nabla \phi_t(W_t)-\nabla \ell_t(W_t)}^2_{ A_t^{-1}}-\norm{\nabla \phi_t(W_t)}^2_{A_{t}^{-1}} \\
  & = \Tr([(\nabla \phi_t(W_t)-\nabla \ell_t(W_t))(\nabla \phi_t(W_t)-\nabla \ell_t(W_t))^\top - \nabla \phi_t(W_t)\nabla \phi_t(W_t)^\top]A_t^{-1}).  
\end{align*}
Then, if we choose a regularizer $\phi_t$ such that the following {\em Hessian-dominance} condition
\begin{equation}
  (\nabla \phi_t(W_t)-\nabla \ell_t(W_t))(\nabla \phi_t(W_t)-\nabla \ell_t(W_t))^\top - \nabla \phi_t(W_t)\nabla \phi_t(W_t)^\top \preceq c \nabla^2 \ell_t(W_t)  \label{eq:hessian_dominance}
\end{equation}
holds for some constant $c$, then the instantaneous regret can be upper bounded by
\[\Tr(c\nabla^2 \ell_t(W_t) A_t^{-1}).\]
Summing this up from $t=1$ to $T$ yields a harmonic sum that can be bounded by standard techniques by $O(c\log T)$. Note that in this analysis we need $c = \text{poly}(B, R)$ to get polynomial dependence on $B$ and $R$ in the regret bound.

For the binary case $K = 2$, it is easy to check that both of the following choices of $\phi_t$
\[\phi_t(W) = \ell(Wx_t, 1) + \ell(Wx_t, 2) \quad \text{ or } \quad \phi_t(W) = \tfrac{1}{2}\ell(Wx_t, 1) + \tfrac{1}{2}\ell(Wx_t, 2)\]
can be used to satisfy \eqref{eq:hessian_dominance}. \citet{jezequel20a} use the first choice of $\phi_t$ to design AIOLI.

Unfortunately, this elegant Hessian dominance technique breaks down when we have more than 2 classes, at least for regularizers of a certain form which satisfy a certain {\em symmetry} condition (satisfied by the two choices for the binary case above). To describe the issue, it will be convenient to drop the $t$ subscript since it is irrelevant to the analysis. We will assume that the regularizer is of the form $\phi(W) = \psi(Wx)$ for some function $\psi: \reals^K \rightarrow \reals$, and the symmetry condition we need is the following: for any two classes $y \neq y'$ and any $z \in \reals^K$, if $z_y = z_{y'}$, then $\nabla \psi(z)_y = \nabla \psi(z)_{y'}$. 

Let $K = 3$. Consider a regularizer $\phi$ satisfying the form and symmetry conditions above. Assume w.l.o.g. that the true label is $1$. Then the gradients of the loss and regularizer are 
\[ \nabla \ell(W) = (\sigma(Wx) - \be_1) \otimes x \quad \text{ and } \quad \nabla \phi(W) = \nabla \psi(Wx) \otimes x.\]
With some abuse of notation, for the rest of this section we will use $\sigma$ to denote $\sigma(Wx)$ and $\psi'$ to denote $\nabla \psi(Wx)$. Then the Hessian equals
\[(\diag(\sigma) - \sigma\sigma^\top) \otimes (xx^\top).\]
The Hessian dominance condition \eqref{eq:hessian_dominance} reduces to the following:
\begin{equation}
  [(\sigma - \be_1 - \psi')(\sigma - \be_1 - \psi')^\top - \psi'{\psi'}^\top] \otimes (xx^\top) \preceq c(\diag(\sigma) - \sigma\sigma^\top) \otimes (xx^\top).  \label{eq:hessian_dominance_simplified}
\end{equation}
We can now show that the above condition cannot hold uniformly for all $W$ and $x$ such that $\|W\|_{2,\infty} \leq B$ and $\|x\| \leq R$ unless $c = \Omega(\exp(BR))$. In particular, choose 
\[W = \text{diag}(0, 0, B) \quad \text{ and } \quad x = (0, 0, R)^\top,\]
so that $Wx = (0, 0, BR)^\top$. Now let $v = (\be_1 - \be_2) \otimes x$. Then we have $v^\top \sigma = 0$, since $\sigma_1 = \sigma_2$, and $v^\top \psi' = 0$, since $\psi'_1 = \psi'_2$ by the symmetry condition. Using these facts, we have
\[v^\top \left([(\sigma - \be_1 - \psi')(\sigma - \be_1 - \psi')^\top - \psi'{\psi'}^\top] \otimes (xx^\top)\right) v = v^\top [(\be_1\be_1^\top) \otimes (xx^\top)] v = \|x\|_2^4 = R^4.\]
Whereas, we have
\[v^\top \left((\diag(\sigma) - \sigma\sigma^\top) \otimes (xx^\top)\right) v = v^\top \left(\diag(\sigma) \otimes (xx^\top)\right) v = (\sigma_1 + \sigma_2)\|x\|_2^4 = (\sigma_1 + \sigma_2)R^4.\]
Since $\sigma_1 = \sigma_2 = \frac{1}{\exp(BR) + 2}$, for the Hessian dominance condition~\eqref{eq:hessian_dominance_simplified} to hold, we must have $c \geq \frac{1}{2}\exp(BR) + 1$, as claimed.
\end{document}